\newcommand{\retime}{{{\textsc{ReTime}}}}
\newtheorem{mydef}{Definition}[section]
  \providecommand\BibTeX{{%
    \normalfont B\kern-0.5em{\scshape i\kern-0.25em b}\kern-0.8em\TeX}}}
\begin{document}

\title{Retrieval Based Time Series Forecasting}


\author{Baoyu Jing}
\email{baoyuj2@illinois.edu}
\affiliation{%
  \institution{University of Illinois at Urbana-Champaign}
    \country{}
}

\author{Si Zhang}
\email{sizhang@fb.com}
\affiliation{%
  \institution{Meta}
    \country{}
}
\author{Yada Zhu}
\email{yzhu@us.ibm.com}
\affiliation{%
  \institution{IBM Research}
    \country{}
}
\author{Bin Peng}
\email{binpeng@illinois.edu}
\affiliation{%
  \institution{University of Illinois at Urbana-Champaign}
    \country{}
}
\author{Kaiyu Guan}
\email{kaiyug@illinois.edu}
\affiliation{%
  \institution{University of Illinois at Urbana-Champaign}
    \country{}
}
\author{Andrew Margenot}
\email{margenot@illinois.edu}
\affiliation{%
  \institution{University of Illinois at Urbana-Champaign}
    \country{}
}
\author{Hanghang Tong}
\email{htong@illinois.edu}
\affiliation{%
  \institution{University of Illinois at Urbana-Champaign}
    \country{}
}

\renewcommand{\shortauthors}{Baoyu Jing et al.}


\begin{abstract}

Time series data appears in a variety of applications such as smart transportation and environmental monitoring.
One of the fundamental problems for time series analysis is time series forecasting. 
Despite the success of recent deep time series forecasting methods, they require sufficient observation of historical values to make accurate forecasting. 
In other words, the ratio of the output length (or forecasting horizon) to the sum of the input and output lengths should be low enough (e.g., 0.3).
As the ratio increases (e.g., to 0.8), the uncertainty for the forecasting accuracy increases significantly.
In this paper, we show both theoretically and empirically that the uncertainty could be effectively reduced by retrieving relevant time series as references. 
In the theoretical analysis, we first quantify the uncertainty and show its connections to the Mean Squared Error (MSE). 
Then we prove that models with references are easier to learn than models without references since the retrieved references could reduce the uncertainty.
To empirically demonstrate the effectiveness of the retrieval based time series forecasting models, we introduce a simple yet effective two-stage method, called \retime\, consisting of a relational retrieval and a content synthesis. 
We also show that \retime\ can be easily adapted to the spatial-temporal time series and time series imputation settings.
Finally, we evaluate \retime\ on real-world datasets to demonstrate its effectiveness. 

\end{abstract}




\maketitle

\section{Introduction}
Time series analysis has received paramount interest in numerous real-world applications \cite{zhou2021informer,gamboa2017deep,li2019enhancing,wu2021autoformer,DBLP:conf/www/ZhouZZLH20, zhu2022network, yang2019deep}, such as smart transportation and environmental monitoring.
Accurately forecasting time series provides valuable insights for making transport policies \cite{li2018exploring} and climate change policies \cite{sarker2012exploring}.

One of the fundamental problems for time series analysis is time series forecasting.
Despite the success of recent deep learning methods for time series forecasting \cite{zhou2021informer,gamboa2017deep,li2019enhancing,wu2021autoformer,DBLP:conf/www/ZhouZZLH20},
a sufficient observation of the input historical time series is required.
Specifically, the \textit{ratio} of the output length (i.e., forecasting horizon) to the sum of the input and output length should be sufficiently low (e.g., 0.3).
In fact, this ratio is a special case of the \textit{missing rate} used in time series imputation, and thus for clarity and consistency with the literature, we formulate the task of time series forecasting from the perspective of time series imputation in this paper. 
Please refer to Section \ref{sec:preliminary} for details.
When the missing rate increases to a high level (e.g., 0.8), the uncertainty of the forecasting accuracy will increase significantly.
In real-world applications, it is common that one wants to forecast future values based on very limited observations. 
In smart transportation, government administrators might be interested in forecasting the traffic conditions of a road with broken sensors \cite{wang2009forecasting}. 
In environmental monitoring, geologists always have a desire of obtaining the temperature of a specific location, where sensors cannot be easily placed \cite{nalder1998spatial}.

To address this problem, we first formally quantify the uncertainty of the forecasting based on the entropy of the ground truth conditioned on the predicted values, and show its connections to the Mean Squared Error (MSE).
Then we theoretically prove that models with reference time series are easier to train than those without references, since the references could reduce uncertainties.
Motivated by the theoretical analysis, we introduce a simple yet effective two-stage time forecasting method called \retime.
Given a target time series, in the first relational retrieval stage, \retime\ retrieves the references from a database based on the relations among time series.
We use relational retrieval rather than content based retrieval since the input historical values of the target time series could be very unreliable when the input length is very short.
Thus, content-based methods could retrieve unreliable references.
In comparison, the relational information is usually reliable and easy to obtain in practice \cite{bonnet2001towards, pelkonen2015gorilla, rhea2017littletable, song2018deep}, such as whether two sensors are adjacent to each other in traffic/environmental monitoring.
In the second content synthesis stage, \retime\ synthesizes the future values based on the content of the target and the references.
Next, we show that the proposed \retime\ could be easily applied to the time series imputation task and the spatial-temporal time series setting.
Finally, we empirically evaluate \retime\ on two real-world datasets to demonstrate its effectiveness.

The main contributions of the paper are summarized as follows: 
\begin{itemize}
    \item \textbf{Theoretical Analysis.} We theoretically quantify the uncertainty of the predicted values based on conditional entropy and show its connections to the MSE. We also theoretically demonstrate that models with references are easier to train than those without references.
    \item \textbf{Algorithm.} We introduce a two-stage method \retime\ for time series forecasting, which is comprised of relational retrieval and content synthesis. \retime\ can also be easily applied to the spatial-temporal time series and time series imputation settings.
    \item \textbf{Empirical Evaluation.} We evaluate \retime\ on two real-world datasets and various settings (i.e., single/spatial temporal forecasting/imputation) to demonstrate its effectiveness. 
\end{itemize}

\section{Preliminary}\label{sec:preliminary}
The ratio ${L_{out}}/{(L_{in}+L_{out})}$ of the output length $L_{out}$ to the sum of the input length $L_{in}$ and output length $L_{out}$ is a special case of the missing rate used in time series imputation, and thus we formulate tasks of time series forecasting from the perspective of time series imputation.
We summarize the mathematical notations in Table \ref{tab:notations}.

\begin{mydef}[Single Time Series Forecasting]
Given an incomplete target time series $\mathbf{X}\in\mathbb{R}^{T\times v}$, where $T$ and $v$ are the numbers of time steps and variates, along with its indicator mask $\mathbf{M}\in\{0,1\}^{{T}\times v}$, which indicates the absence/presence of the data points, the task aims to generate a new target time series $\hat{\mathbf{X}}\in\mathbb{R}^{T\times v}$ to predict the missing values $(1-\mathbf{M})\odot\hat{\mathbf{X}}$.
\end{mydef}

\begin{mydef}[Retrieval Based Single Time Series Forecasting]
Given an incomplete target time series $\mathbf{X}\in\mathbb{R}^{T\times v}$, where $T$ and $v$ are the numbers of time steps and variates, along with its indicator mask $\mathbf{M}\in\{0,1\}^{{T}\times v}$, the task aims to generate a new target time series $\hat{\mathbf{X}}\in\mathbb{R}^{T\times v}$ to predict the missing values $(1-\mathbf{M})\odot\hat{\mathbf{X}}$ based on the input target time series $\mathbf{X}$ and the $K$ reference time series $\{\mathbf{Y}_k\}_{k=1}^K$ retrieved from a database $\{\mathbf{Y}_n'\in\mathbb{R}^{T'\times v}\}_{n=1}^N$, where $N\gg K$ is the number of time series and $T'\gg T$.
\end{mydef}

\begin{mydef}[Spatial-Temporal Time Series Forecasting]
Given an incomplete spatial-temporal time series $\mathbf{X}\in\mathbb{R}^{N\times T \times v}$, where $N$, $T$ and $v$ are the numbers of time series, time steps, and variates, along with its indicator mask $\mathbf{M}\in\{0,1\}^{N\times T \times v}$, where 0/1 indicates the absence/presence of the data points, and its adjacency matrix $\mathbf{A}\in\mathbb{R}^{N\times N}$,
the task is to generate a new time series $\hat{\mathbf{X}}\in\mathbb{R}^{N\times T \times v}$ to predict the missing values $(1-\mathbf{M})\odot\hat{\mathbf{X}}$.
\end{mydef}

Note that for forecasting, missing points and zeros are concentrated at the end of the target $\mathbf{X}$ and mask $\mathbf{M}$ after a certain separation time step $\tau$: $\mathbf{M}[\tau:T]=0$.

\begin{table}[]
    \centering
    \begin{tabular}{ll}
    \hline
    Notation &  Description\\
    \hline
    $\mathbf{X}$ & incomplete target time series\\
    $\Tilde{\mathbf{X}}$ & complete target time series\\
    $\hat{\mathbf{X}}$ & predicted target time series\\
    $\mathbf{Y}_k$ & $k$-th reference time series\\
    $\mathbf{Y}_n'$ & $n$-th time series in the database\\
    $\mathbf{M}$ & indicator mask of the target time series\\
    $\mathbf{A}'$ & adjacency matrix for the database $\{\mathbf{Y}_n'\}_{n=1}^N$\\
    $\mathbf{A}$ & adjacency matrix for $\{\mathbf{Y}_n'\}_{n=1}^N$ and $\mathbf{X}$\\
    $\mathcal{R}$ & relation between $\mathbf{X}$ and $\{\mathbf{Y}_n'\}_{n=1}^N$\\
    $T$ & length of time series\\
    $T'$ & length of time series in $\{\mathbf{Y}_n'\}_{n=1}^N$\\
    $N$ & number of time series in the database\\
    $K$ & number of reference time series\\
    $v$ & number of variates\\
    $d$ & size of hidden dimension\\
    $\tau$ & separation time separating history and future time\\
    \hline
    \end{tabular}
    \caption{Mathematical Notations}
    \label{tab:notations}
\end{table}

\section{Theoretical Motivation}\label{sec:theory}
\begin{figure}
    \centering
    \begin{subfigure}[b]{.2\textwidth}
        \centering
        \includegraphics[width=0.8\textwidth]{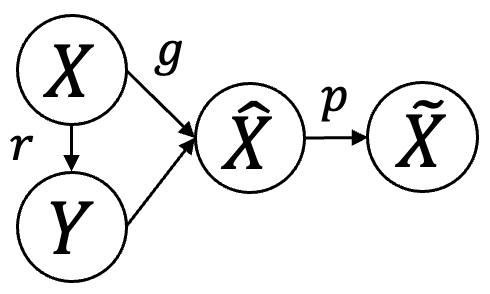}
        \caption{With references}\label{fig:ref}
    \end{subfigure}
    \quad
    \begin{subfigure}[b]{.2\textwidth}
    \centering
        \includegraphics[width=0.8\textwidth]{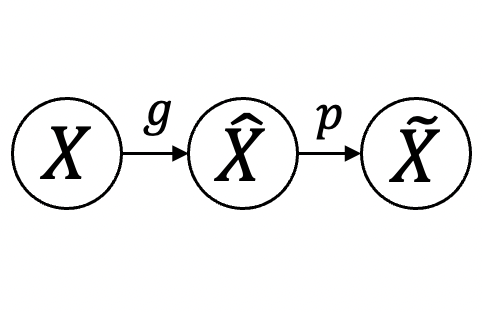}
        \caption{Without references}\label{fig:no_ref}
    \end{subfigure}
    \caption{Graphical models for methods w. or w/o references. $X$, $\hat{X}$, $\Tilde{X}$ and $Y$ are random variables for incomplete, generated, complete, and retrieved time series. $g$ and $r$ are the generation and retrieval models. $p$ is the relation between $\hat{X}$ and $\Tilde{X}$.}
    \label{fig:graphical_model}
\end{figure}

\begin{figure*}
    \centering
    \includegraphics[width=0.7\textwidth]{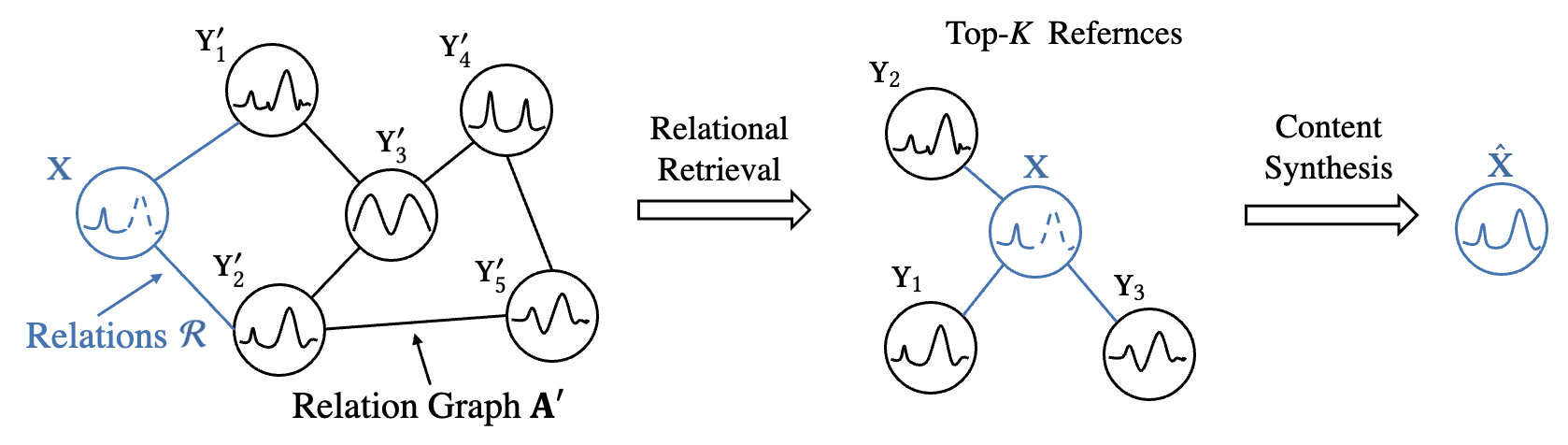}
    \caption{Overview of \retime. Given $\mathbf{X}$ and $\{\mathbf{Y}_{n}'\}_{n=1}^N$, \retime\  first retrieves the top $K$ references $\{\mathbf{Y}_k\}_{k=1}^K$ based on the relations $\mathbf{A}'$ and $\mathcal{R}$, and then combines the content of $\mathbf{X}$ and $\{\mathbf{Y}_{k}\}_{k=1}^K$ to generate $\hat{\mathbf{X}}$. Solid/dashed curves are observed/unobserved values.}
    \label{fig:overview}
\end{figure*}

An illustration of graphical models for methods with or without references is presented in Figure \ref{fig:graphical_model}, which shows the relations among random variables. 
$X$, $\Tilde{X}$, $\hat{X}$ and $Y$ denote the random variables for the incomplete target, complete target, generated target, and retrieved reference time series respectively.
$g$ and $r$ denote the generation and retrieval model respectively. 
$p$ is the relation between $\hat{X}$ and $\Tilde{X}$.
Following the common practice for linear regression \cite{murphy2012machine}, which assumes a Gaussian noise between $\hat{X}$ and $\Tilde{X}$, we define the relation $p$ as:
\begin{equation}\label{eq:relation}
    p(\Tilde{\mathbf{x}}|\hat{\mathbf{x}})=\mathcal{N}(\Tilde{\mathbf{x}}|\hat{\mathbf{x}}, \sigma^2\mathbf{I})
\end{equation}
where $\mathcal{N}$ denotes the Gaussian distribution, $\hat{\mathbf{x}}$ and $\Tilde{\mathbf{x}}\in\mathbb{R}^{v}$ denote the values of $\hat{\mathbf{X}}$ and $\Tilde{\mathbf{X}}\in\mathbb{R}^{(T-\tau)\times v}$ at a future time step $t\in[\tau, T]$,
$\sigma$ is the standard deviation, and 
$\mathbf{I}\in\mathbb{R}^{v\times v}$ is the identity matrix.
Note that $\mathbf{X}[:\tau]=\Tilde{\mathbf{X}}[:\tau]$ for historical steps $t\in[1, \tau)$, and thus we only consider the future time steps $t\in[\tau, T]$.

We first quantify the uncertainty $\Delta$ for the accuracy of $\hat{X}$ in Definition \ref{def:uncertainty} as the entropy of $\Tilde{X}$ conditioned on $\hat{X}$.
According to Equation \ref{eq:relation} and the definition of conditional entropy, we can calculate the uncertainty $\Delta$ (Lemma \ref{lemma:uncertainty_calculation}). 
As the only parameter in $\Delta$ is the standard deviation $\sigma$ in Equation \ref{eq:relation},  we can further prove that $\Delta$ is equivalent to the MSE between $\Tilde{X}$ and $\hat{X}$.

\begin{mydef}[Uncertainty of $\hat{X}$]\label{def:uncertainty}
\begin{equation}
    \Delta = H(\Tilde{X}|\hat{X})
\end{equation}
where $H$ denotes the entropy.
\end{mydef}

\begin{lemma}[Uncertainty Calculation]\label{lemma:uncertainty_calculation}
According to the definition of conditional entropy and Equation \eqref{eq:relation}, we have:
\begin{equation}
    \Delta = \frac{v}{2}(1+\log2\pi\sigma^2)
\end{equation}
\end{lemma}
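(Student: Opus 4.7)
The plan is to unroll the definition of conditional differential entropy against the Gaussian in Equation \eqref{eq:relation} and exploit the fact that the entropy of a Gaussian depends only on its covariance, not its mean. Concretely, I would start from
\[
\Delta = H(\Tilde{X}\mid\hat{X}) = \mathbb{E}_{\hat{\mathbf{x}}}\!\left[-\int p(\Tilde{\mathbf{x}}\mid\hat{\mathbf{x}})\log p(\Tilde{\mathbf{x}}\mid\hat{\mathbf{x}})\,d\Tilde{\mathbf{x}}\right],
\]
and observe that, by the translation invariance of the Gaussian, the inner integral is independent of $\hat{\mathbf{x}}$; hence the outer expectation collapses and only the differential entropy of $\mathcal{N}(\hat{\mathbf{x}},\sigma^2\mathbf{I})$ remains to be computed.

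Next I would plug in the explicit density
\[
p(\Tilde{\mathbf{x}}\mid\hat{\mathbf{x}}) = (2\pi\sigma^2)^{-v/2}\exp\!\left(-\tfrac{1}{2\sigma^2}\|\Tilde{\mathbf{x}}-\hat{\mathbf{x}}\|^2\right),
\]
so that $-\log p(\Tilde{\mathbf{x}}\mid\hat{\mathbf{x}}) = \tfrac{v}{2}\log(2\pi\sigma^2) + \tfrac{1}{2\sigma^2}\|\Tilde{\mathbf{x}}-\hat{\mathbf{x}}\|^2$. Taking the expectation under the same Gaussian splits the computation into two parts: the $\log$-normalizer term is a constant that immediately gives $\tfrac{v}{2}\log(2\pi\sigma^2)$, while the quadratic term uses the identity $\mathbb{E}\!\left[\|\Tilde{\mathbf{x}}-\hat{\mathbf{x}}\|^2\right] = \mathrm{tr}(\sigma^2\mathbf{I}) = v\sigma^2$, contributing $\tfrac{v}{2}$. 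Summing yields
\[
\Delta = \tfrac{v}{2}\log(2\pi\sigma^2) + \tfrac{v}{2} = \tfrac{v}{2}\bigl(1+\log 2\pi\sigma^2\bigr),
\]
as required.

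There is no genuine obstacle here since this is the standard formula for the differential entropy of an isotropic Gaussian; the only thing to be careful about is keeping the dimension factor $v$ consistent (the determinant of $\sigma^2\mathbf{I}$ is $\sigma^{2v}$, not $\sigma^2$) and noting explicitly that, because the conditional density is translation invariant in $\hat{\mathbf{x}}$, no assumption on the marginal distribution of $\hat{X}$ is needed for the outer expectation to simplify. That observation is what makes the statement hold as a clean closed form rather than an expression depending on the distribution of $\hat{X}$.
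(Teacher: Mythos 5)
Your proof is correct and follows exactly the route the paper intends (the paper states the lemma with only the remark that it follows from the definition of conditional entropy and Equation \eqref{eq:relation}, giving no further detail): you compute the differential entropy of the isotropic Gaussian $\mathcal{N}(\hat{\mathbf{x}},\sigma^2\mathbf{I})$, correctly handle the determinant factor $\sigma^{2v}$, and note that translation invariance makes the result independent of the marginal of $\hat{X}$. No gaps; the observation about not needing any assumption on $p(\hat{\mathbf{x}})$ is a worthwhile point the paper leaves implicit.
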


\begin{lemma}[Equivalence between Uncertainty and MSE]\label{lemma:delta=mse}
The uncertainty is equivalent to MSE.
\begin{equation}
    \Delta \Leftrightarrow MSE
\end{equation}
\end{lemma}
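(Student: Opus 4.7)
The plan is to show that $\Delta$ from Lemma \ref{lemma:uncertainty_calculation} is a strictly monotone function of the MSE between $\tilde{X}$ and $\hat{X}$, so that $\Delta$ and MSE have the same minimizers and hence play interchangeable roles as training objectives. This is the natural reading of ``equivalence'' in context, since the paper's subsequent comparison between reference-based and reference-free models only needs that reducing one quantity reduces the other.

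First I would pin down the meaning of the parameter $\sigma^2$ appearing in Lemma \ref{lemma:uncertainty_calculation}. The Gaussian relation \eqref{eq:relation} gives $\mathbb{E}[\|\tilde{X}-\hat{X}\|^2\mid\hat{X}] = v\sigma^2$ for every realization of $\hat{X}$, so the per-coordinate expected squared error is exactly $\sigma^2$. Equivalently, $\sigma^2$ is precisely the value obtained by maximum-likelihood fitting of \eqref{eq:relation} to the residuals $\tilde{X}-\hat{X}$. Thus the identification $\sigma^2 = \mathrm{MSE}$ is not an extra assumption but a direct consequence of the model in \eqref{eq:relation}.

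Second, I would substitute this identification into the formula $\Delta = \tfrac{v}{2}(1+\log 2\pi\sigma^2)$ to obtain
\begin{equation*}
\Delta \;=\; \frac{v}{2}\bigl(1+\log(2\pi\cdot\mathrm{MSE})\bigr).
\end{equation*}
Because $v>0$ and $\log$ is strictly increasing on $(0,\infty)$, the right-hand side is a strictly increasing function of $\mathrm{MSE}$. Hence $\arg\min\Delta = \arg\min\mathrm{MSE}$ over any predictor class on which the MSE is positive, which is the sense of $\Delta\Leftrightarrow\mathrm{MSE}$ claimed in the lemma.

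The step I expect to require the most care is the first one: $\sigma$ is written in \eqref{eq:relation} as if it were a fixed constant, but to turn Lemma \ref{lemma:uncertainty_calculation} into a statement about MSE one must view $\sigma^2$ as the variance of \eqref{eq:relation} that best explains the residuals produced by the predictor $\hat{X}$. Making this reading explicit, and then noting strict monotonicity of $\log$, is what promotes the purely formal expression in Lemma \ref{lemma:uncertainty_calculation} to the equivalence stated here.
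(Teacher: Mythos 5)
Your proposal is correct and follows essentially the same route as the paper: the paper likewise observes that $\Delta$ depends only on $\sigma$ and that $\sigma$ is estimated by the (square root of the) empirical MSE of the residuals, which is exactly your identification $\sigma^2=\mathrm{MSE}$. You go slightly further by making the strict monotonicity of $\Delta$ in MSE explicit, which is a reasonable tightening of the paper's rather terse argument but not a different proof.
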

\begin{proof}
The only parameter of $\Delta$ in Lemma \ref{lemma:uncertainty_calculation} is the standard deviation $\sigma$, which can be estimated by:
\begin{equation}
    \sigma = \sqrt{\frac{1}{Z}\sum_{z=1}^Z||\Tilde{\mathbf{x}}_z-\hat{\mathbf{x}}_z||^2}
\end{equation}
where $Z$ is the total number of data pairs. The item under the square root is MSE between $\Tilde{X}$ and $\hat{X}$.
\end{proof}

We further study the relations among the inputs $X$, $Y$, output $\hat{X}$ of the generation model $g$, and the complete time series $\Tilde{X}$ based on their dependencies shown in Figure \ref{fig:graphical_model}.
Firstly, given $\Tilde{X}$ and $\hat{X}$, we show in Lemma \ref{lemma:mse_and_mi} that minimizing their MSE loss is equivalent to maximizing their mutual information $I(\Tilde{X}; \hat{X})$.
Secondly, we prove that adding the retrieved reference $Y$ to the input could reduce the uncertainty for $X$. 
Lemma \ref{lemma_2} shows that methods with $Y$ have a higher lower-bound than those without $Y$ for the mutual information of the ground-truth $\Tilde{X}$ and the predicted values $\hat{X}$: $I(\Tilde{X};\hat{X}) \geq I(\Tilde{X};X, Y) \geq I(\Tilde{X};X)$.
Finally, due to the equivalence of MSE and MI (Lemma \ref{lemma:mse_and_mi}), we can conclude that models with $Y$ (Figure \ref{fig:ref}) are easier to learn than models without $Y$ (Figure \ref{fig:no_ref}) under the MSE loss.

\begin{lemma}[Equivalence between MSE and MI]\label{lemma:mse_and_mi}
Minimizing the MSE loss of $\Tilde{X}$ and $\hat{X}$ is equivalent to maximize the mutual information of $\Tilde{X}$ and $\hat{X}$: $I(\Tilde{X}; \hat{X})$.
\begin{equation}
    \min \mathbb{E}_{p(\Tilde{\mathbf{x}}, \hat{\mathbf{x}})}||\Tilde{\mathbf{x}} - \hat{\mathbf{x}}||^2 \Leftrightarrow \max I(\Tilde{X}, \hat{X})
\end{equation}
where $p(\Tilde{\mathbf{x}}, \hat{\mathbf{x}})$ indicates whether $\Tilde{\mathbf{x}}$ and $\hat{\mathbf{x}}$ is a true pair.
\end{lemma}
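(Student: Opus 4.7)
The plan is to chain together the definitions and lemmas already established in the excerpt, reducing the claim to a monotonicity argument in $\sigma^2$. The key identity is the standard decomposition
\[
    I(\Tilde{X}; \hat{X}) = H(\Tilde{X}) - H(\Tilde{X}\mid\hat{X}),
\]
so that maximizing mutual information is, up to the constant $H(\Tilde{X})$ (which depends only on the data distribution, not on the model $g$), the same as minimizing the conditional entropy $H(\Tilde{X}\mid\hat{X})$. By Definition~\ref{def:uncertainty} this conditional entropy is exactly the uncertainty $\Delta$.

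Next, I would invoke Lemma~\ref{lemma:uncertainty_calculation}, which under the Gaussian assumption of Equation~\eqref{eq:relation} gives
\[
    \Delta = \tfrac{v}{2}\bigl(1 + \log 2\pi\sigma^2\bigr).
\]
Since $\log$ is a strictly increasing function, minimizing $\Delta$ is equivalent to minimizing $\sigma^2$. Then, using the estimator for $\sigma$ that appears in the proof of Lemma~\ref{lemma:delta=mse},
\[
    \sigma^2 = \frac{1}{Z}\sum_{z=1}^{Z}\|\Tilde{\mathbf{x}}_z - \hat{\mathbf{x}}_z\|^2,
\]
I would identify $\sigma^2$ with the empirical estimate of $\mathbb{E}_{p(\Tilde{\mathbf{x}},\hat{\mathbf{x}})}\|\Tilde{\mathbf{x}}-\hat{\mathbf{x}}\|^2$, so that minimizing the MSE loss is the same as minimizing $\sigma^2$. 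Concatenating the three equivalences,
\[
    \min \mathrm{MSE} \;\Leftrightarrow\; \min \sigma^2 \;\Leftrightarrow\; \min \Delta \;\Leftrightarrow\; \max I(\Tilde{X};\hat{X}),
\]
which yields the claim.

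The only subtle point, and the one I would be most careful about, is justifying why $H(\Tilde{X})$ may be treated as a constant with respect to the optimization. This is standard in information-theoretic treatments of supervised learning (the marginal of the ground-truth is fixed by the data-generating process and does not depend on the learnable parameters of $g$), but it should be stated explicitly. A secondary subtlety is that the equivalence between the population MSE and $\sigma^2$ relies on the Gaussian noise model in Equation~\eqref{eq:relation}; under this model the MLE of $\sigma^2$ is precisely the empirical MSE, so the identification is exact in expectation. No new calculation is required beyond chaining Definition~\ref{def:uncertainty}, Lemma~\ref{lemma:uncertainty_calculation}, and Lemma~\ref{lemma:delta=mse} with the entropy decomposition of $I(\Tilde{X};\hat{X})$.
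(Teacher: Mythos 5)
Your proof is correct and reaches the stated equivalence, but it takes a genuinely different middle route from the paper's. Both arguments share the opening move --- the decomposition $I(\Tilde{X};\hat{X}) = H(\Tilde{X}) - H(\Tilde{X}|\hat{X})$ with $H(\Tilde{X})$ treated as a constant of the data distribution (your caveat about making this explicit is exactly the justification the paper gives). They diverge after that. The paper stays at the level of expected log-likelihood: it identifies $-H(\Tilde{X}|\hat{X})$ with $\mathbb{E}_{p(\Tilde{\mathbf{x}},\hat{\mathbf{x}})}[\log p(\Tilde{\mathbf{x}}|\hat{\mathbf{x}})]$ by the definition of conditional entropy, and then invokes the standard fact that maximizing the Gaussian log-likelihood of Equation~\eqref{eq:relation} is equivalent to minimizing MSE; it never needs the closed-form entropy. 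You instead pass through that closed form: $H(\Tilde{X}|\hat{X}) = \Delta = \tfrac{v}{2}(1+\log 2\pi\sigma^2)$ from Lemma~\ref{lemma:uncertainty_calculation}, monotonicity of $\log$ to reduce to minimizing $\sigma^2$, and the identification of $\sigma^2$ with the empirical MSE from the proof of Lemma~\ref{lemma:delta=mse}. In effect you compose Definition~\ref{def:uncertainty}, Lemma~\ref{lemma:uncertainty_calculation}, and Lemma~\ref{lemma:delta=mse} with the MI decomposition, which is the more economical route given what the paper has already established, at the cost of inheriting the extra assumption that $\sigma$ is profiled out at its maximum-likelihood value so that $\sigma^2$ equals the MSE. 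Both routes equally rely on the conditional law of $\Tilde{X}$ given $\hat{X}$ being exactly the Gaussian of Equation~\eqref{eq:relation} (otherwise the paper's Equation~\eqref{eq:lemma1_3} is a cross-entropy rather than the conditional entropy, and your $\Delta$ is not the true $H(\Tilde{X}|\hat{X})$), so neither is more rigorous than the other.
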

\begin{proof}
Minimizing MSE of $\Tilde{\mathbf{x}}$ and $\hat{\mathbf{x}}$ is equivalent to maximizing the log-likelihood $\log p(\Tilde{\mathbf{x}}|\hat{\mathbf{x}})$ \cite{murphy2012machine}, where $p$ is given in Equation \eqref{eq:relation}:
\begin{equation}\label{eq:lemma1_1}
    \min \mathbb{E}_{p(\Tilde{\mathbf{x}}, \hat{\mathbf{x}})}||\Tilde{\mathbf{x}} - \hat{\mathbf{x}}||^2 \Leftrightarrow \max \mathbb{E}_{p(\Tilde{\mathbf{x}}, \hat{\mathbf{x}})}[\log p(\Tilde{\mathbf{x}}|\hat{\mathbf{x}})]
\end{equation}
Therefore, we only need to prove 
\begin{equation}
    \max\mathbb{E}_{p(\Tilde{\mathbf{x}}, \hat{\mathbf{x}})}[\log p(\Tilde{\mathbf{x}}|\hat{\mathbf{x}})]\Leftrightarrow \max I(\Tilde{X};\hat{X})
\end{equation}
In fact,
\begin{equation}
    I(\Tilde{X};\hat{X}) = H(\Tilde{X}) - H(\Tilde{X}|\hat{X})
\end{equation}
and $H(\Tilde{X})$ is a constant since the ground-truth $\Tilde{X}$ is fixed in the dataset.
Thus, we have
\begin{equation}\label{eq:lemma1_2}
    \max  I(\Tilde{X};\hat{X}) \Leftrightarrow \max - H(\Tilde{X}|\hat{X})
\end{equation}
According to the definition of conditional entropy, we have 
\begin{equation}\label{eq:lemma1_3}
    - H(\Tilde{X}|\hat{X}) = \mathbb{E}_{p(\Tilde{\mathbf{x}}, \hat{\mathbf{x}})}[\log p(\Tilde{\mathbf{x}}|\hat{\mathbf{x}})]
\end{equation}
The proof is concluded by combining Equations \eqref{eq:lemma1_1}\eqref{eq:lemma1_2}\eqref{eq:lemma1_3}.
\end{proof}

\begin{lemma}[MI Monotonicity]\label{lemma_2}
The following MI inequalities hold for the graphical model shown in Figure \ref{fig:ref}.
\begin{equation} 
    I(\Tilde{X};\hat{X}) \geq I(\Tilde{X};X, Y) \geq I(\Tilde{X};X)
\end{equation}
\end{lemma}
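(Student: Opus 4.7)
The two inequalities come from standard information-theoretic facts, but applied to two different structures in Figure~\ref{fig:ref}. For the left inequality I would invoke the data processing inequality (DPI); for the right inequality I would use the chain rule of mutual information together with non-negativity of conditional mutual information.

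\textbf{Left inequality: } $I(\Tilde{X};\hat{X}) \geq I(\Tilde{X};X,Y)$. The graphical model in Figure~\ref{fig:ref} shows that $\hat{X}$ is produced by the generation model $g$ from the pair $(X, Y)$, and that the relation $p$ connects $\hat{X}$ to $\Tilde{X}$ (Equation~\eqref{eq:relation}). Reading off conditional independences, the triple forms the Markov chain
\begin{equation}
    (X, Y) \longrightarrow \hat{X} \longrightarrow \Tilde{X},
\end{equation}
since $p(\Tilde{\mathbf{x}}\mid \hat{\mathbf{x}}, \mathbf{x}, \mathbf{y}) = p(\Tilde{\mathbf{x}}\mid\hat{\mathbf{x}})$ by Equation~\eqref{eq:relation}. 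Applying the DPI to this chain immediately yields $I(\Tilde{X}; X, Y) \leq I(\Tilde{X}; \hat{X})$, which is the desired inequality. The only thing that needs care here is making the Markov property explicit from the graph, which I would do by noting that $\Tilde{X}$ depends on $(X, Y)$ only through $\hat{X}$ under the assumed noise model.

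\textbf{Right inequality: } $I(\Tilde{X}; X, Y) \geq I(\Tilde{X}; X)$. By the chain rule of mutual information,
\begin{equation}
    I(\Tilde{X}; X, Y) \;=\; I(\Tilde{X}; X) + I(\Tilde{X}; Y \mid X),
\end{equation}
and since conditional mutual information is non-negative, $I(\Tilde{X}; Y \mid X) \geq 0$, the claim follows. Intuitively, equality holds iff the retrieved reference $Y$ provides no additional information about $\Tilde{X}$ beyond what $X$ already carries; for any informative retrieval model $r$, the inequality will be strict, which is the qualitative point the paper wants.

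\textbf{Expected difficulty.} Both steps are short once the graphical model is correctly interpreted, so the main obstacle is not an estimate but a modeling one: justifying that $(X,Y)\to\hat{X}\to\Tilde{X}$ is genuinely a Markov chain in the sense required by DPI. The subtlety is that $\Tilde{X}$ is in reality the source that generated $X$ (by masking) and plausibly influenced $Y$ (through the retrieval relation $\mathcal{R}$), so one has to commit to the generative direction encoded by the Gaussian noise relation $p(\Tilde{\mathbf{x}}\mid\hat{\mathbf{x}})$ in Equation~\eqref{eq:relation} and treat that as the conditional law defining the joint distribution. Once this convention is fixed, both inequalities are one-line consequences of DPI and the chain rule, and combined with Lemma~\ref{lemma:mse_and_mi} they deliver the intended conclusion that adding $Y$ raises the MI lower bound and hence makes the MSE objective easier to minimize.
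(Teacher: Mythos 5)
Your proof is correct and follows essentially the same route as the paper's own sketch: the data processing inequality applied to the Markov chain $(X,Y)\rightarrow\hat{X}\rightarrow\Tilde{X}$ for the left inequality, and the chain rule $I(\Tilde{X};X,Y)=I(\Tilde{X};X)+I(\Tilde{X};Y|X)$ with non-negativity of conditional mutual information for the right one. Your additional care in justifying the Markov property from the assumed noise model of Equation~\eqref{eq:relation} is a worthwhile elaboration of a step the paper leaves implicit, but it does not change the argument.
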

\begin{proof}[Sketch of Proof]
The first inequality is derived based on the data processing inequality \cite{mezard2009information}.
The second inequality holds since $I(\Tilde{X}; X, Y)=I(\Tilde{X};X)+I(\Tilde{X};Y|X)$ and $I(\Tilde{X};Y|X) \geq 0$.
\end{proof}

\section{Methodology}\label{sec:method}
Figure \ref{fig:overview} is an overview of \retime.
In the first stage, \retime\ retrieves the top $K$ references $\{\mathbf{Y}_k\}_{k=1}^K$ for the target $\mathbf{X}$ from the database $\{\mathbf{Y}_n'\}_{n=1}^N$, based on the relations $\mathcal{R}$ between the target and database and the relation graph $\mathbf{A}'$ of the database.
In the second stage, \retime\ combines $\mathbf{X}$ and $\{\mathbf{Y}_k\}_{k=1}^K$ to generate $\hat{\mathbf{X}}$.

\subsection{Relational Retrieval}\label{sub_sec:reteival}
When the missing rate of the target $\mathbf{X}$ is high (e.g., 0.8), it will be hard to accurately complete $\mathbf{X}$ merely based on the observed historical content of $\mathbf{X}$.
Even for recent forecasting methods \cite{zhou2021informer}, the uncertainty of the prediction accuracy could be very high under a high missing rate.
To reduce the uncertainty, we propose to retrieve $K$ references $\{\mathbf{Y}_k\}_{k=1}^K$ from the database $\{\mathbf{Y}_n'\}_{n=1}^N$, based on the relations $\mathcal{R}$ and $\mathbf{A}'$.
We choose relational retrieval over content-based retrieval since the observed historical content could be noisy and unreliable when the missing rate is high.

Given $\mathbf{A}'\in\mathbb{R}^{N\times N}$ and $\mathcal{R}$, we first construct a new adjacency matrix $\mathbf{A}\in\mathbb{R}^{(N+1)\times (N+1)}$ by appending $\mathcal{R}$ to the last row and column of $\mathbf{A}'$.
Then we use the Random Walk with Restart (RWR) \cite{tong2006fast} to obtain the relational proximity scores between $\mathbf{X}$ and $\{\mathbf{Y}_n'\in\mathbb{R}^{T'\times v}\}_{n=1}^N$, whose closed-form solution is given by:
\begin{equation}\label{eq:rwr}
    \mathbf{p} = (1-c)(\mathbf{I} - c\Tilde{\mathbf{A}})^{-1}\mathbf{e}
\end{equation}
where $\Tilde{\mathbf{A}}$ is the normalized adjacency matrix, $\mathbf{I}$ is the identity matrix and $c\in(0, 1)$ is the tunable damping factor; 
$\mathbf{e}\in\{0,1\}^{N+1}$ is the indicator vector of $\mathbf{X}$,  where $\mathbf{e}[N+1]=1$ and $\mathbf{e}[n]=0$ for $\forall n\in[1, \cdots, N]$;
$\mathbf{p}\in\mathbb{R}^{N+1}$ is the relational proximity vector where $\mathbf{p}[n]$ describes the proximity between $\mathbf{X}$ and $\mathbf{Y}_n'$. 
Given $\mathbf{p}$, we retrieve the top $K$ time series $\{\mathbf{Y}_k\}_{k=1}^K$ as the references.

Instead of using the entire $\mathbf{Y}_k\in\mathbb{R}^{T'\times v}$ as the reference, we only use a $T$-length ($T\ll T'$) snippet of it, since using the entire sequence could introduce much irrelevant noisy information.
Let $T_S$ and $T_S'$ be the start time of the target $\mathbf{X}$ and the reference snippet respectively.
Many time series have clear periodicity patterns, such as the weekly pattern of traffic monitoring data and the yearly pattern of the air temperature.
In this paper, we exploit the periodicity patterns and set the time difference between the target and $\Delta T = T_S - T_S'$ as the length of one period.

\subsection{Content Synthesis}\label{sub_sec:generation}
Despite the relational closeness of $\mathbf{X}$ and $\{\mathbf{Y}_k\}_{k=1}^K$, they usually have content discrepancies. 
For example, peaks and valleys of $\mathbf{X}$ and $\mathbf{Y}_k$ might be different, and $\mathbf{Y}_k$ might be noisy.
Besides, there are also temporal dependencies among different time steps.
Therefore, it is necessary to build a model to combine their content.

Figure \ref{fig:content_synthesis} presents an illustration of the content synthesis model, which is comprised of the input, aggregation, and output modules.
The input module maps $\mathbf{X}\in\mathbb{R}^{T\times v}$ and $\{\mathbf{Y}_k\in\mathbb{R}^{T\times v}\}_{k=1}^K$ into embeddings $\mathbf{H}\in\mathbb{R}^{(K+1)\times T \times d}$, where $d$ is the size of hidden dimension.
Then the aggregation module aggregates the content of $\mathbf{H}$ across the $K+1$ time series and $T$ time steps into the aggregated embeddings $\mathbf{H}$.
Finally, the output module generates the completed time series $\hat{\mathbf{X}}$ based on the aggregated embeddings $\mathbf{H}$.

\begin{figure}
    \centering
    \includegraphics[width=.23\textwidth]{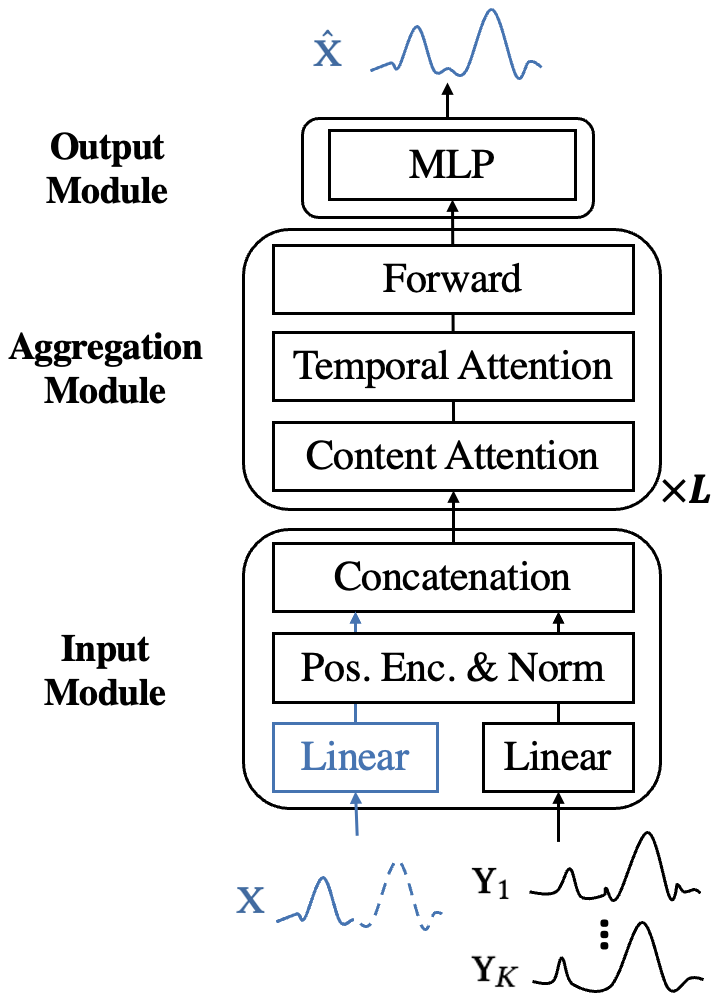}
    \caption{Content Synthesis}
    \label{fig:content_synthesis}
\end{figure}

\noindent{\bf A - Input Module.}
The input module maps the time series into the embedding space.
Given $\mathbf{X}$ and $\{\mathbf{Y}_k\}_{k=1}^K$, we first apply separate linear layers to them.
Then we apply the position encoding \cite{vaswani2017attention} and layer norm \cite{ba2016layer} to obtain the embeddings $\mathbf{H}\in\mathbb{R}^{T\times d}$ and $\{\mathbf{H}'_k\in\mathbb{R}^{T\times d}\}_{k=1}^K$.
Finally, they are concatenated into $\hat{\mathbf{H}}\in\mathbb{R}^{(K+1)\times T\times d}$, where $\hat{\mathbf{H}}[1:K]=[\mathbf{H}'_1,\dots, \mathbf{H}_K']$ and $\hat{\mathbf{H}}[K+1]={\mathbf{H}}$.

\noindent{\bf B - Aggregation Module.}
The aggregation module jointly considers the content discrepancies between $\mathbf{H}$ and $\{\mathbf{H}'_k\}_{k=1}^K$ at each time step and the temporal dependencies across time steps.
Based on the multi-head self-attention \cite{vaswani2017attention}, we build content and temporal attention models to handle the content discrepancies and temporal dependencies.
The aggregation module sequentially applies the content and temporal attention models to $\hat{\mathbf{H}}\in\mathbb{R}^{(K+1)\times T\times d}$, which is comprised of $L$ blocks, and the structure of the block is shown in Figure \ref{fig:content_synthesis}.
Within the block, firstly, the content attention computes content attention scores over the first dimension ($K+1$ time series) of $\hat{\mathbf{H}}$ for each step $t\in[1,\cdots, T]$, and produces the content embeddings $\mathbf{Z}\in\mathbb{R}^{(K+1)\times T\times d}$.
Secondly, the temporal attention computes temporal attention scores over the second dimension ($T$ steps) of $\mathbf{Z}$ for each time series $k\in[1, \cdots, K+1]$, and encodes temporal information into $\mathbf{Z}$.
Finally, a forward layer \cite{vaswani2017attention} is applied to $\mathbf{Z}$ to obtain the aggregated embeddings $\hat{\mathbf{H}}\in\mathbb{R}^{(K+1)\times T\times d}$.

\noindent{\bf C - Output Module.}
The output module maps the aggregated embeddings $\hat{\mathbf{H}}$ from the embedding space $\mathbb{R}^d$ back into the original space $\mathbb{R}^V$ of the time series.
Specifically, the output module takes input as $\hat{\mathbf{H}}[K+1]$, which is the embeddings of the target $\mathbf{X}$.
Then a Multi-Layer Perceptron (MLP) is applied over $\mathbf{H}[K+1]$ to generate the predicted target time series $\hat{\mathbf{X}}$.

\subsection{Adaptation to Other Settings}\label{sub_sec:co-evolv}
\noindent\textbf{A - Spatial-Temporal Time Series.}
Spatial-temporal time series is ubiquitous and have attracted a lot of attention~\cite{yu2016temporal, NEURIPS2020_cdf6581c, wu2020connecting, jing2021network}.
Different from the retrieval-based single time series forecasting, where the target is a single time series \emph{out of} the database $\mathbf{X}\notin\{\mathbf{Y}_n\}_{n=1}^N$,
the target of the spatial-temporal time series is the entire dataset $\mathbf{X}=\{\mathbf{Y}_n\}_{n=1}^N$.
\retime\ can be naturally adapted to spatial-temporal time series by treating each single time series of spatial-temporal time series as the target and the rest time series as the database.
The relational retrieval can be interpreted as the diffusion graph kernel \cite{tong2006fast}, which identifies the most important neighbors, and the content synthesis aggregates the information of the neighbors.

\noindent\textbf{B - Time Series Imputation.}
It is obvious that \retime\ can be naturally applied to the time series imputation task for regularly-sampled time series.

\subsection{Training}
During training, given a complete target time series $\Tilde{\mathbf{X}}$, we generate a binary mask $\mathbf{M}$ to obtain the incomplete target time series $\mathbf{X} = \mathbf{M}\odot\Tilde{\mathbf{X}}$, where $\odot$ is the Hadamard product.
For forecasting, the values after the pre-defined time step $\tau$ are set as zeros.
For imputation, we randomly generate the values for the masks according to the pre-defined missing rate.
Then we feed $\mathbf{X}$ with its references $\{\mathbf{Y}_k\}_{k=1}^K$, which are retrieved by the relational retrieval model, to the content synthesis model to obtain $\hat{\mathbf{X}}$.
We use the standard Mean Squared Error (MSE) between $\hat{\mathbf{X}}$ and $\Tilde{\mathbf{X}}$ to train the model.

\begin{figure*}[t]
\centering
\begin{subfigure}[b]{.23\textwidth}
  \includegraphics[width=\linewidth]{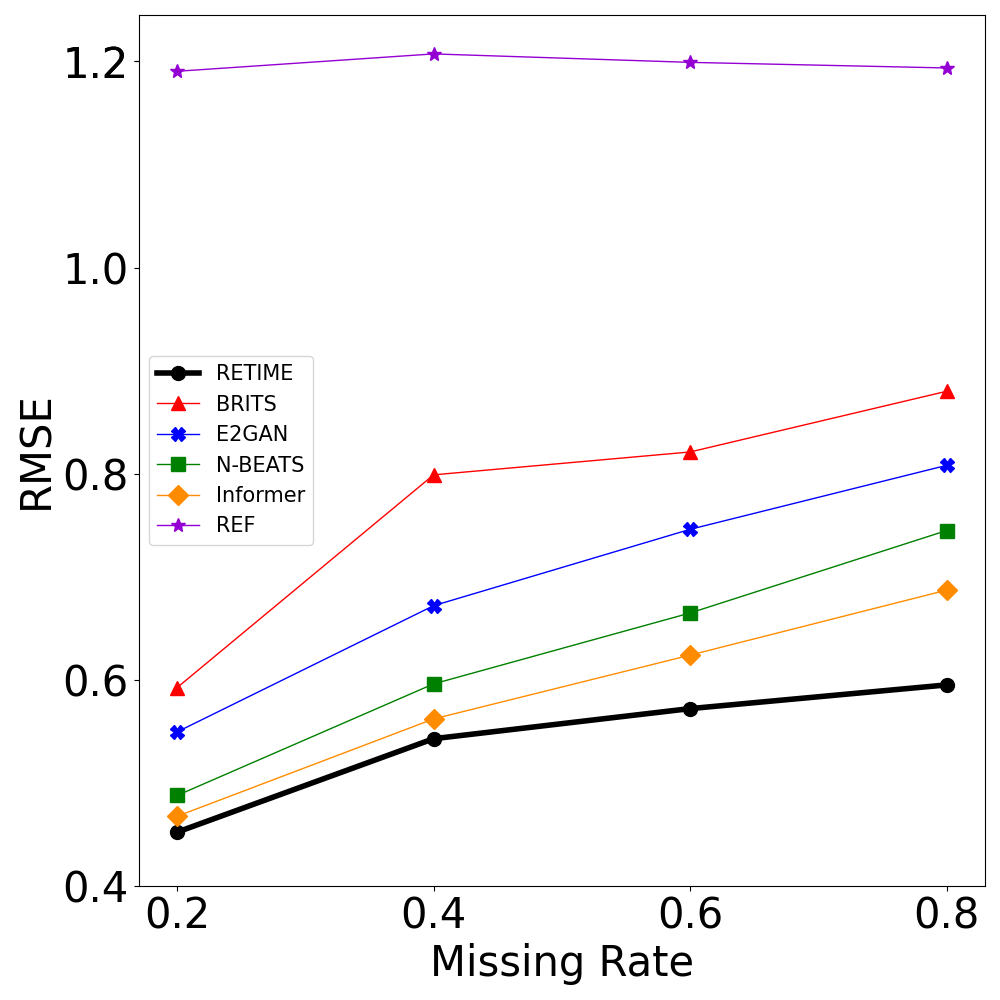}\label{fig:forecast_traffic_single}
  \caption{Traffic Forecasting}
\end{subfigure}\,
\begin{subfigure}[b]{.23\textwidth}
  \includegraphics[width=\linewidth]{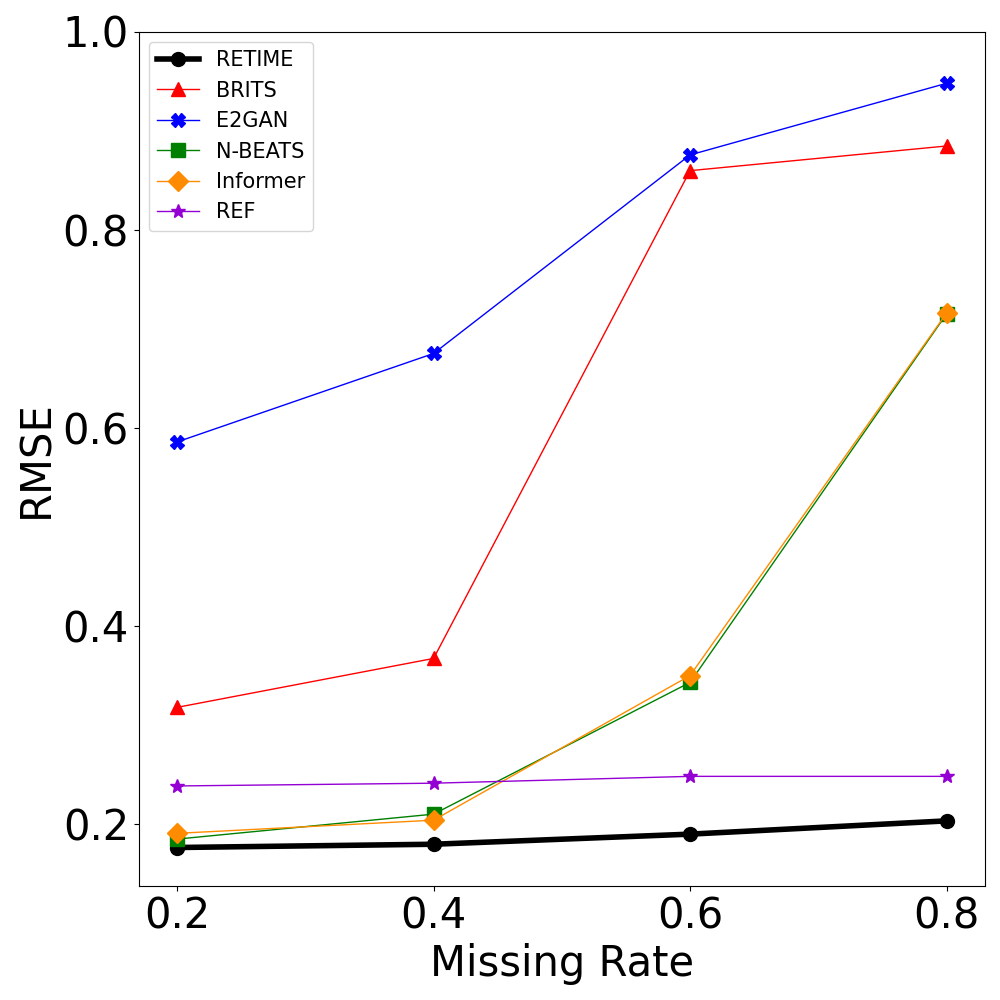}\label{fig:forecast_noaa_single}
  \caption{Temperature Forecasting}
\end{subfigure}\,
\begin{subfigure}[b]{.23\textwidth}
 \includegraphics[width=\linewidth]{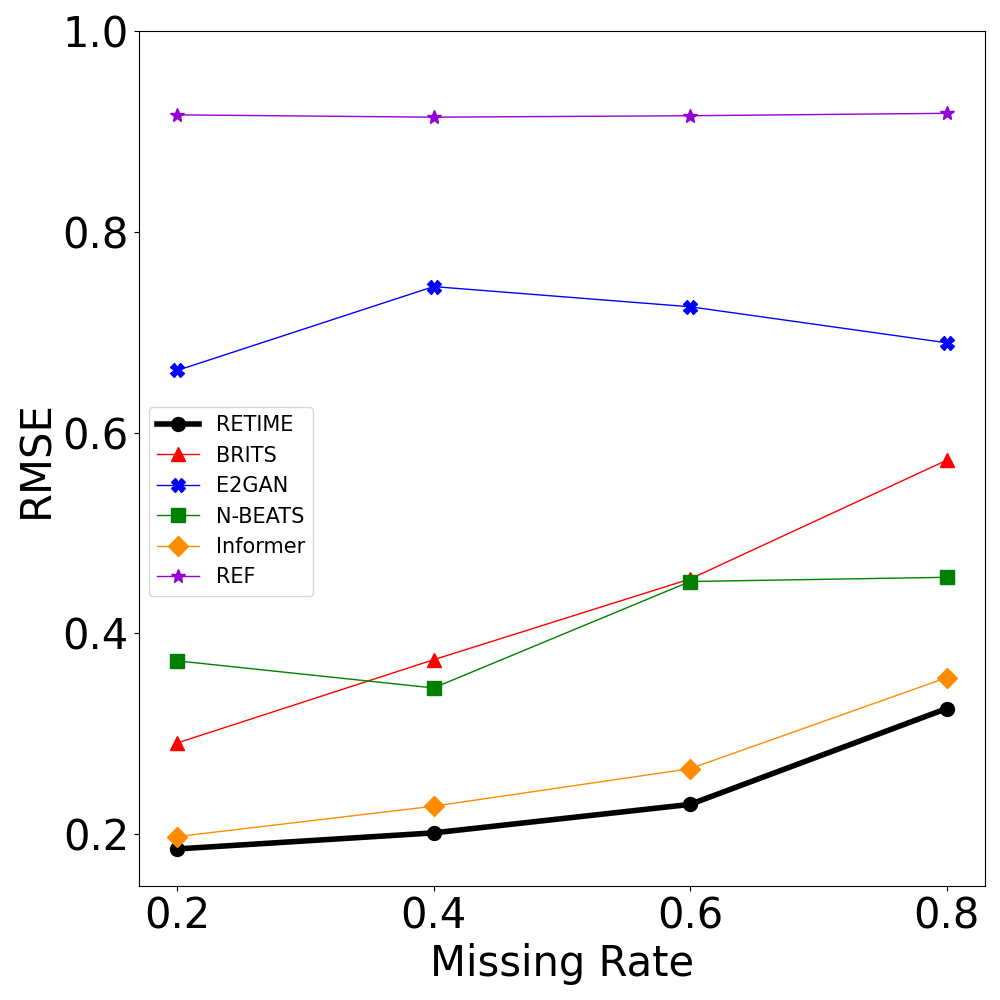}\label{fig:impute_traffic_single}
 \caption{Traffic Imputation}
\end{subfigure}\,
\begin{subfigure}[b]{.23\textwidth}
  \includegraphics[width=\linewidth]{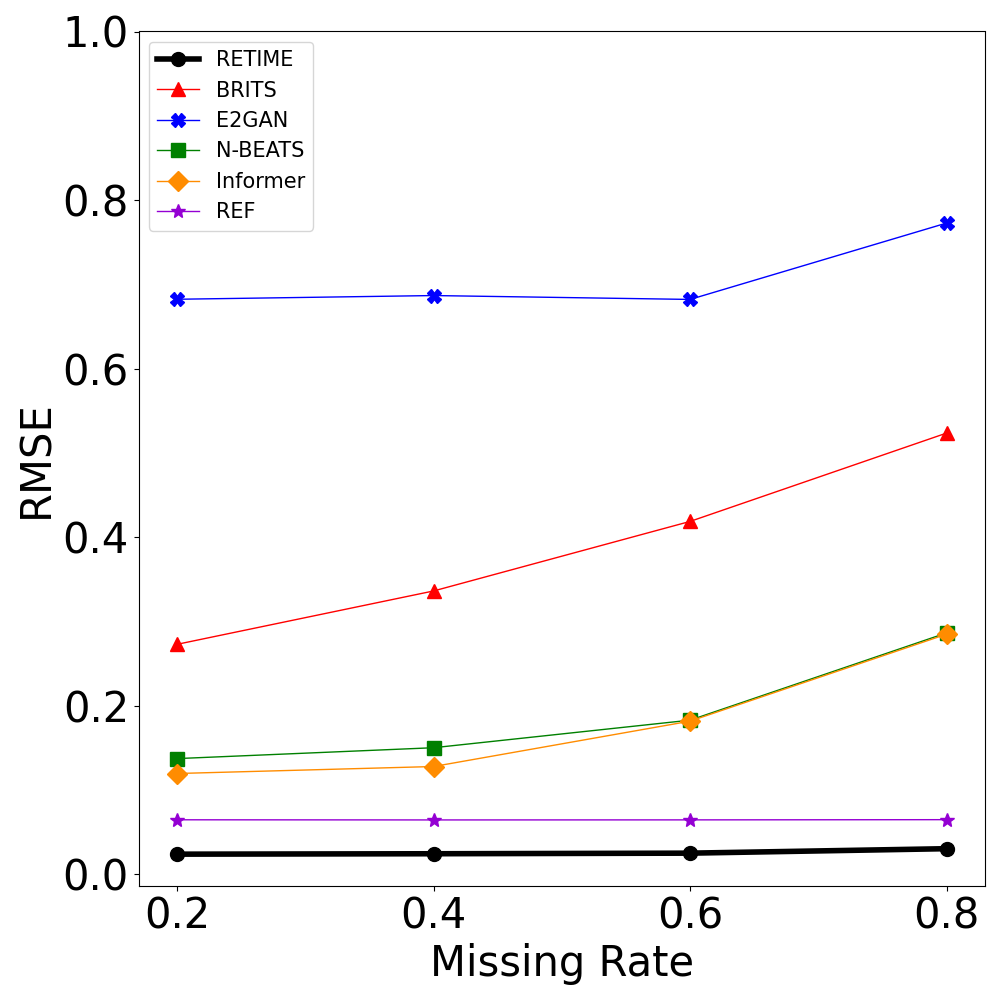}\label{fig:impute_noaa_single}
  \caption{Temperature Imputation}
\end{subfigure}\,
\caption{RMSE scores on single time series forecasting and imputation. The lower the better.}\label{fig:main_single}
\end{figure*}

\begin{figure*}[t]
\centering
\begin{subfigure}[b]{.23\textwidth}
  \includegraphics[width=\linewidth]{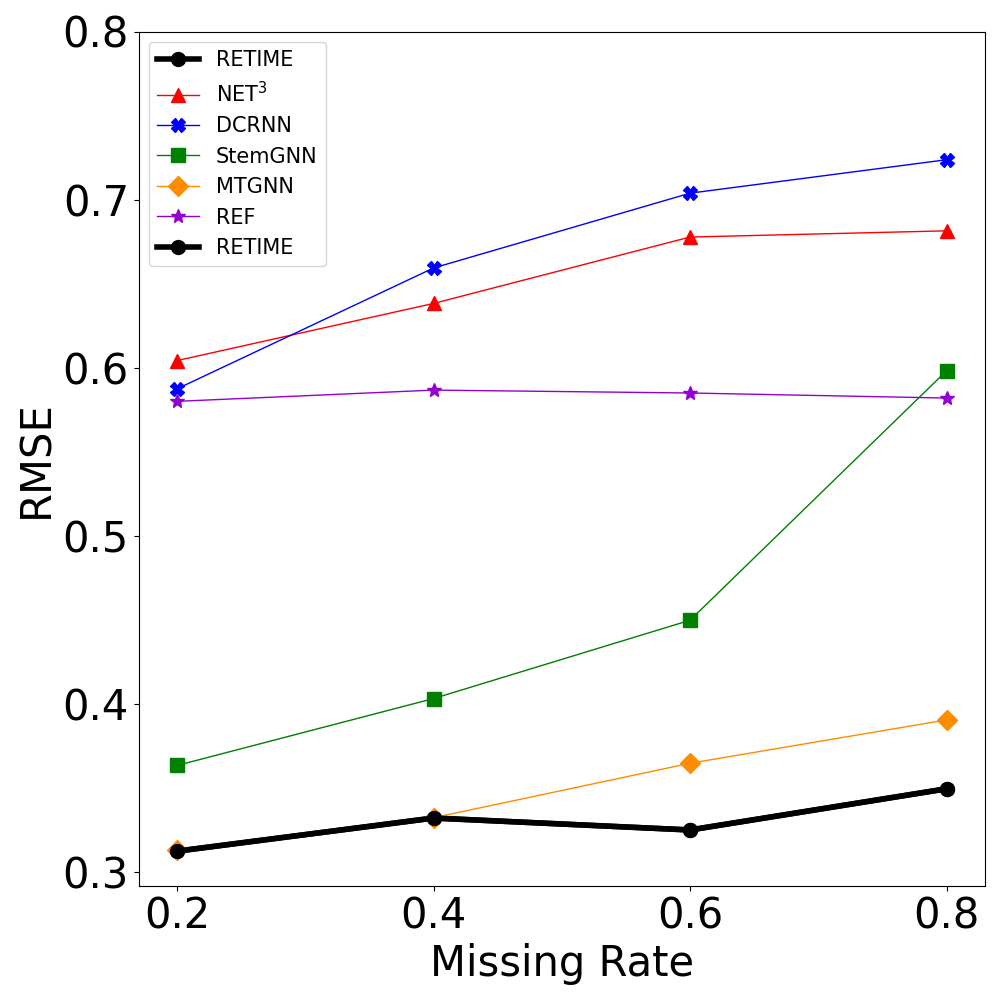}\label{fig:forecast_traffic_co}
  \caption{Traffic Forecasting}
\end{subfigure}\,
\begin{subfigure}[b]{.23\textwidth}
  \includegraphics[width=\linewidth]{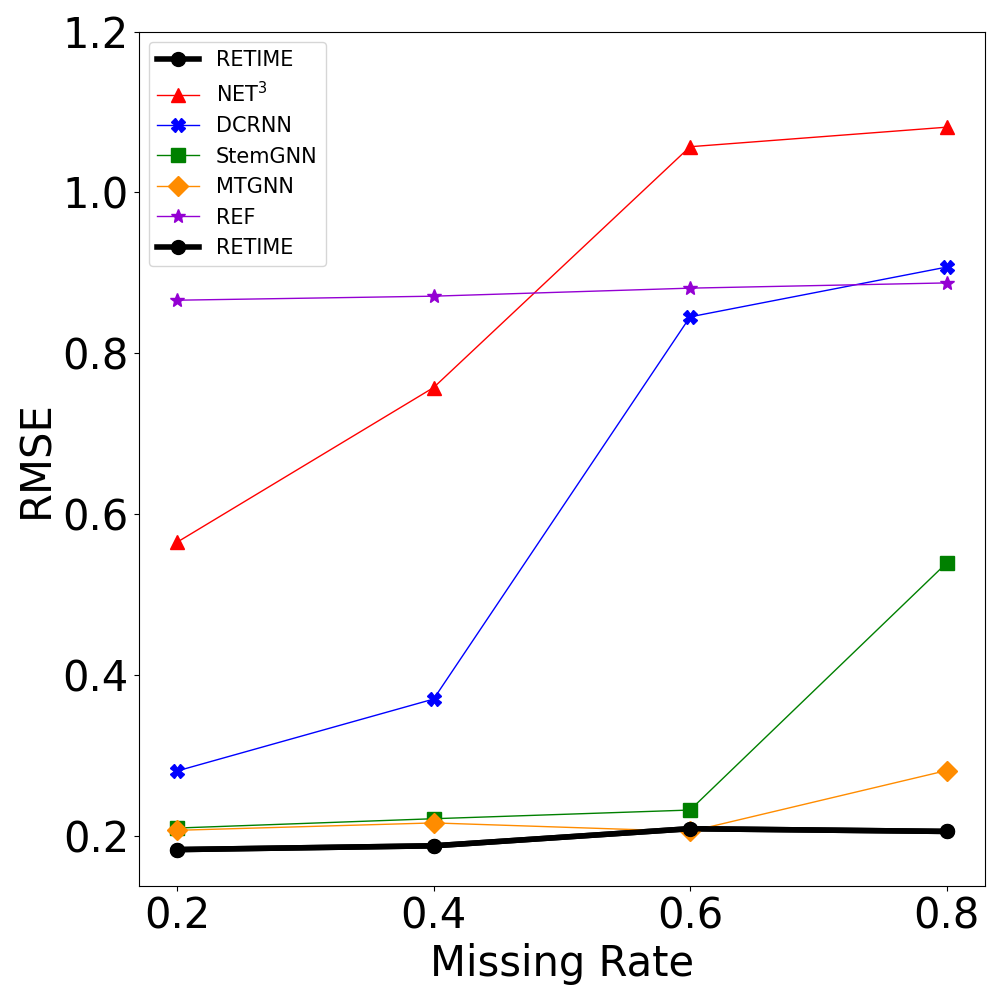}\label{fig:forecast_noaa_co}
  \caption{Temperature Forecasting}
\end{subfigure}\,
\begin{subfigure}[b]{.23\textwidth}
 \includegraphics[width=\linewidth]{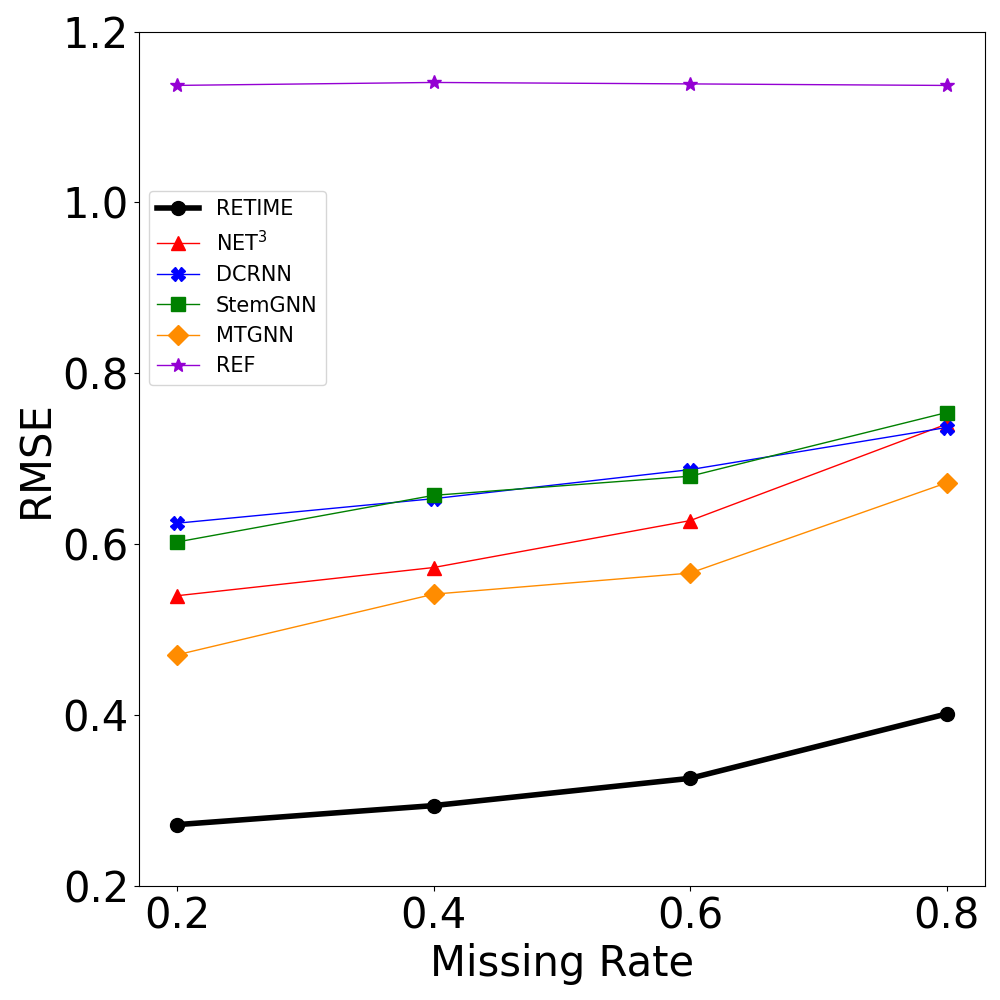}\label{fig:impute_traffic_co}
 \caption{Traffic Imputation}
\end{subfigure}\,
\begin{subfigure}[b]{.23\textwidth}
  \includegraphics[width=\linewidth]{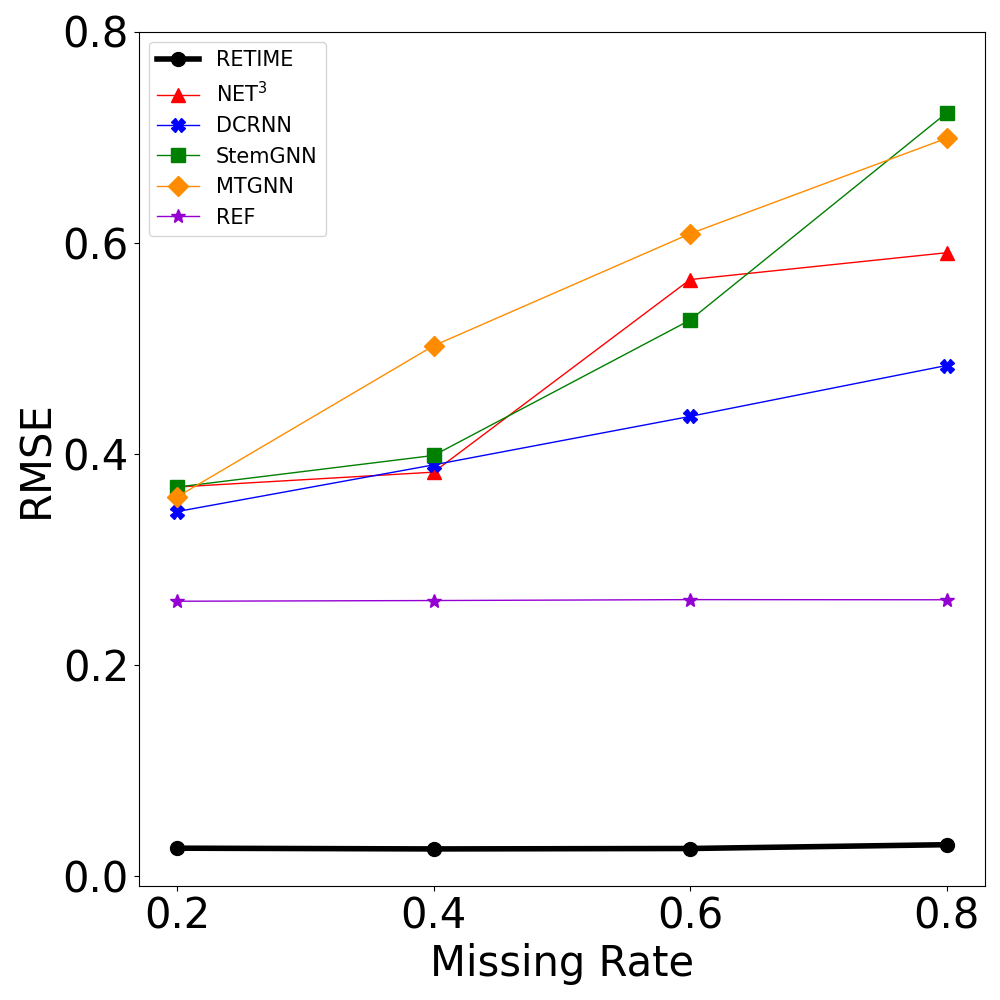}\label{fig:impute_noaa_co}
  \caption{Temperature Imputation}
\end{subfigure}\,
\caption{RMSE scores on spatial-temporal time series forecasting and imputation. The lower the better.}\label{fig:main_co}
\end{figure*}

\section{Experiments}\label{sec:experiments}

\subsection{Experimental Setup}

\noindent\textbf{Datasets.}
We evaluate \retime\ on two real-world datasets, where the shapes of the datasets are formulated as $N \times T \times v$:

\textit{Traffic} dataset is collected from Caltrans Performance Measurement System (PeMS)\footnote{\url{https://dot.ca.gov/programs/traffic-operations/mpr/pems-source}}.
It contains hourly average speed and occupancy collected from 2,000 sensor stations in District 7 of California during June 2018.
The size of the dataset is $2000\times720\times2$.
The relation between two stations is whether they are adjacent.

\textit{Temperature} is a subset of version 3 of the 20th Century Reanalysis\footnote{\url{https://psl.noaa.gov/data/gridded/data.20thC_ReanV3.html}} data \cite{slivinski2019towards}.
It contains the monthly average temperature from 2001 to 2015, which covers a $30\times30$ area of North America, from $30^\circ$ N to $60^\circ$ N, $80^\circ$ W to $110^\circ$ W.
The shape is $900\times180\times1$.
The relation between two locations is whether they are adjacent.

For the single time series setting, given $N$ time series, we randomly select 10\%/10\% time series for validation and testing.
As a result, the train/validation/test splits are 1600/200/200 and 720/90/90 for the traffic and temperature datasets.
For the spatial-temporal setting, we select 10\%/10\% \textit{snippets} from the entire datasets for validation/test. 
After splitting data, we segment each time series into 24/12-length snippets for traffic/temperature datasets respectively, which cover one day/one year.
Time series are normalized according to the mean and standard deviation of the training sets.

\noindent\textbf{Evaluation Tasks.}
We evaluate \retime\ for both single time series and spatial-temporal time series, and on both forecasting and imputation tasks.
For each setting, we fix the snippet length of the target and change the missing rate from 0.2 to 0.8.

\noindent\textbf{Comparison Methods.}
We compare \retime\ with the following baselines. 
The methods for single time series setting include block-style methods: N-BEATS \cite{oreshkin2019n} and Informer \cite{zhou2021informer}, and RNN based methods: BRITS \cite{cao2018brits}, E2GAN \cite{luo2019e2gan}.
The methods for spatial-temporal time series setting include block-style methods: StemGNN \cite{NEURIPS2020_cdf6581c} and MTGNN \cite{wu2020connecting}, and RNN based methods: DCRNN \cite{li2017diffusion}, NET$^3$ \cite{jing2021network}.
Additionally, we also use the reference (REF) with the highest relational score to the target as a baseline. 

\noindent\textbf{Implementation Details.}
The hidden dimensions for the traffic/temperature datasets are 256/64. 
The numbers of blocks and attention heads are 8/4 for the traffic/temperature datasets.
The learning rates are tuned within [0.001, 0.0001].
Early stopping is applied on the validation set to prevent over-fitting.
$K$ is tuned within [1, 5, 10, 20].
For the relational retrieval, we set the damping factor of RWR as $c=0.9$.
Given a target time series snippet, which starts from the time $T_S$, we select reference snippets starting from $T_S'$, and denote their difference as $\Delta T=T_S-T_S'$.
For forecasting, $\Delta T$ is one week/one year for the traffic/temperature datasets, which is generally the length of one cycle.
For imputation, we set $\Delta T = 0$, which has the best performance in experiments.
When applying forecasting models to imputation tasks, we take the entire $\mathbf{X}$ as input and force the models to predict the entire $\Tilde{\mathbf{X}}$.

\begin{figure*}[t]
\centering
\begin{subfigure}[b]{.23\textwidth}
  \includegraphics[width=\linewidth]{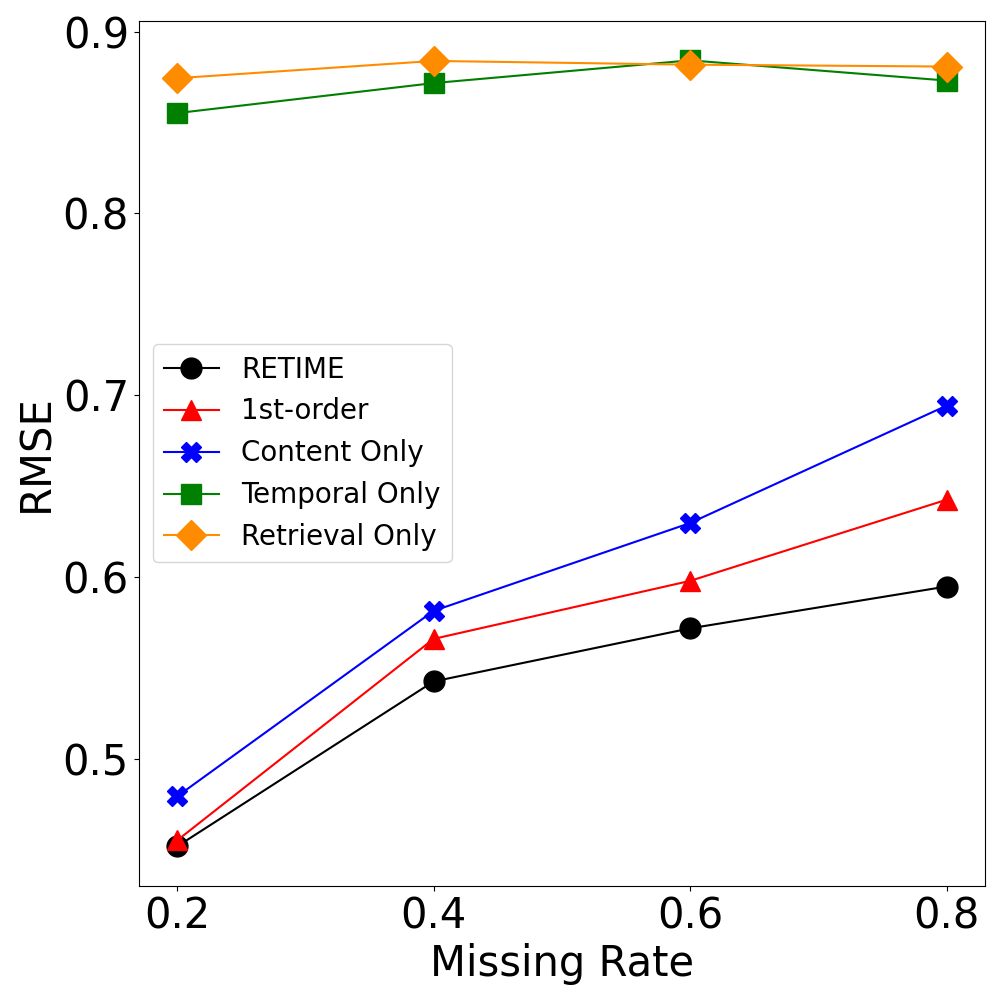}\label{fig:ablation_single_forecast}
  \caption{Single Forecasting}
\end{subfigure}\,
\begin{subfigure}[b]{.23\textwidth}
\includegraphics[width=\linewidth]{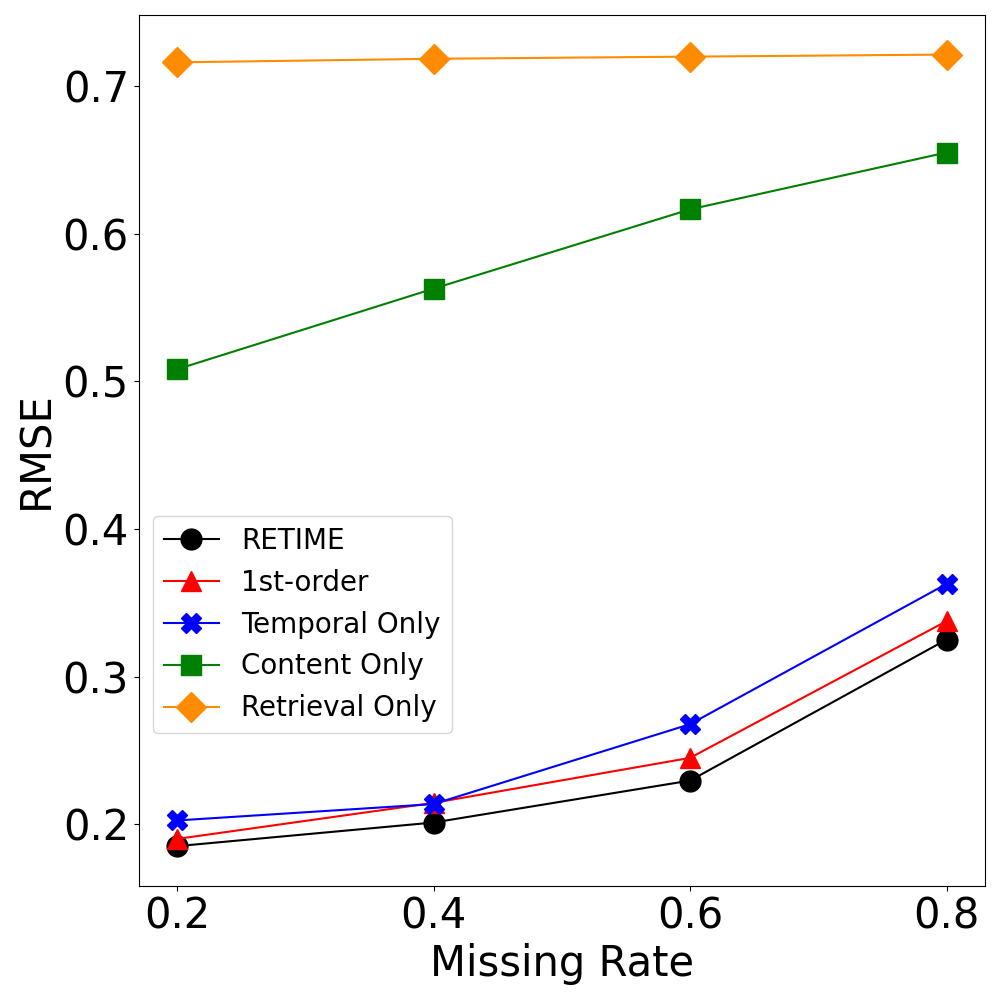}\label{fig:ablation_single_impute}
 \caption{Single Imputation}
\end{subfigure}\,
\begin{subfigure}[b]{.23\textwidth}
  \includegraphics[width=\linewidth]{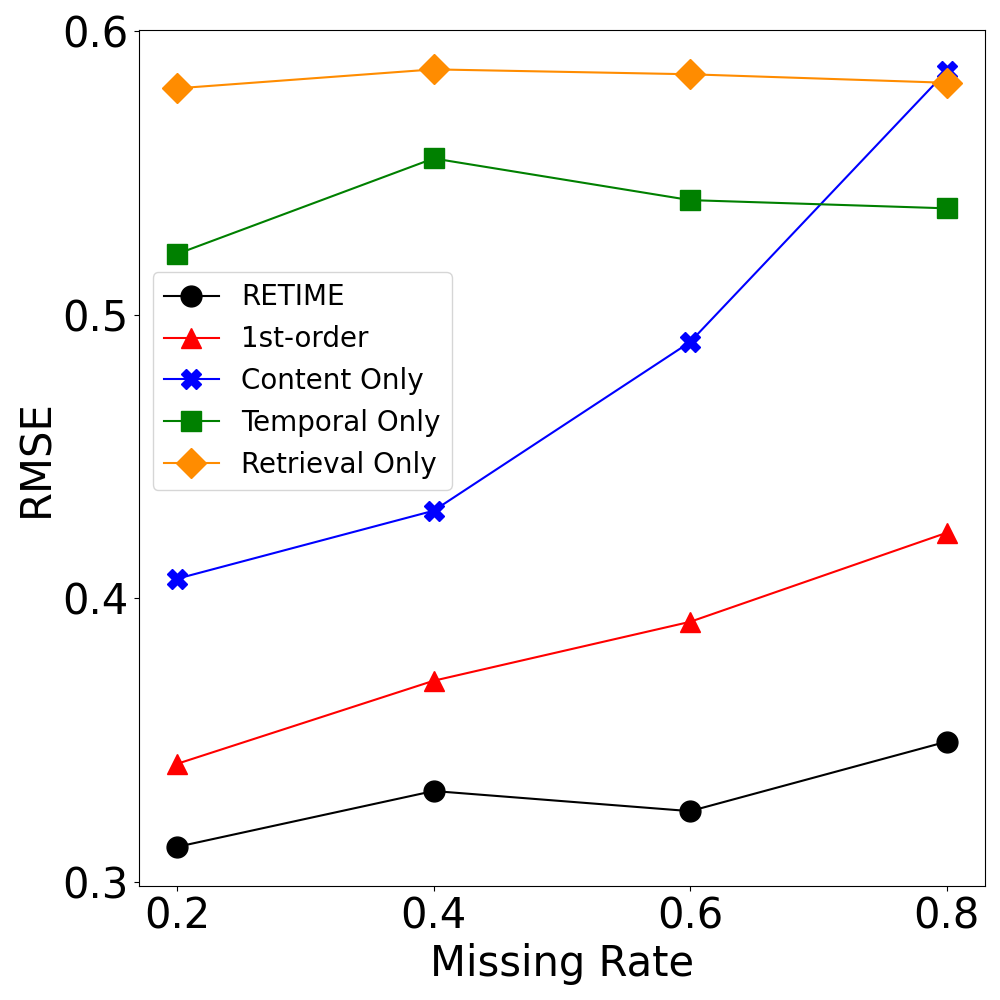}\label{fig:ablation_co_forecast}
  \caption{Spatial-Temporal Forecasting}
\end{subfigure}\,
\begin{subfigure}[b]{.23\textwidth}
  \includegraphics[width=\linewidth]{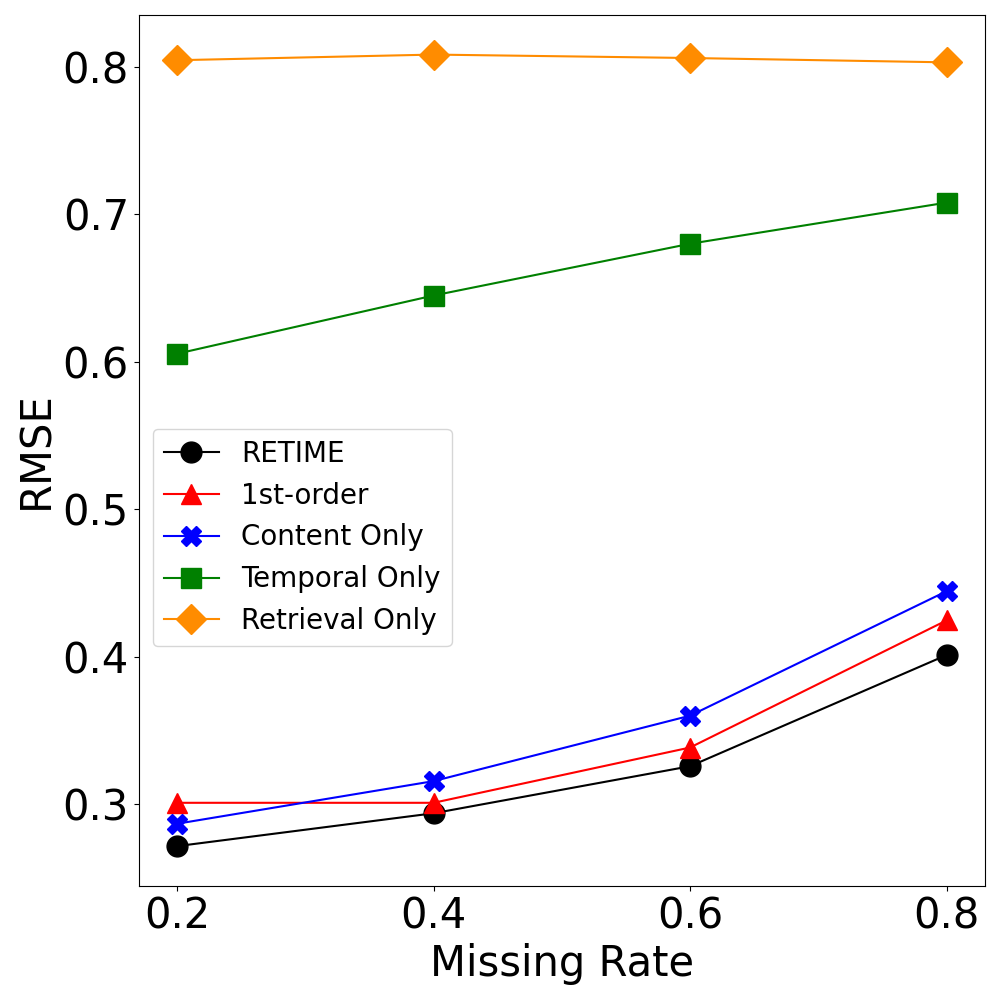}\label{fig:ablation_co_impute}
  \caption{Spatial-Temporal Imputation}
\end{subfigure}\,
\caption{Ablation study on the traffic dataset.}
\label{fig:ablation}
\end{figure*}

\subsection{Main Results}
We compare \retime\ with various baselines for single/spatial-temporal time series forecasting/imputation tasks for different missing rates.

\noindent\textbf{Single Time Series.}
The results for single time series are presented in Figure \ref{fig:main_single}, where the first and second rows show the Root Mean Squared Error (RMSE) for forecasting and imputation respectively.
For both forecasting and imputation, compared with RNN based methods, i.e., BRITS and E2GAN, recent block-style methods, i.e., N-BEATS and Informer, have lower RMSE scores. 
\retime\ has much lower RMSE scores than N-BEATS and Informer, demonstrating the effectiveness of the proposed strategy.

An interesting observation for the temperature dataset is that the simple retrieval baseline REF performs significantly better than other state-of-the-art baselines, corroborating the power of the reference time series.
To be more specific, firstly, as indicated by the performance of the state-of-the-art methods, there might exist some complex temporal patterns for the temperature data, which could not be easily captured by models merely based on the observed content of the targets.
Secondly, the superior performance of REF over the state-of-the-art methods demonstrates that the retrieved time series can indeed significantly reduce uncertainty.

\begin{figure}[t]
\centering
\begin{subfigure}[b]{.23\textwidth}
\includegraphics[width=\linewidth]{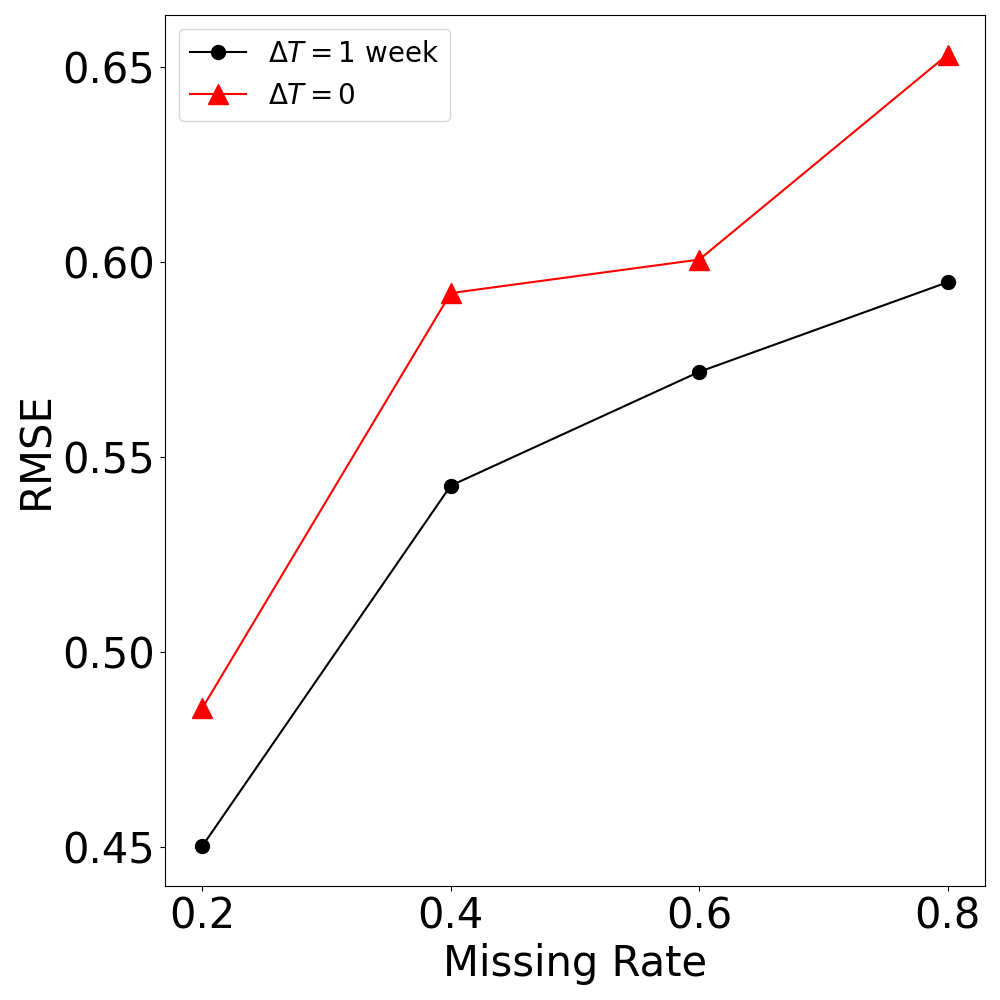}\label{fig:time_shift_single}
 \caption{Single Forecasting}
\end{subfigure}\,
\begin{subfigure}[b]{.23\textwidth}
  \includegraphics[width=\linewidth]{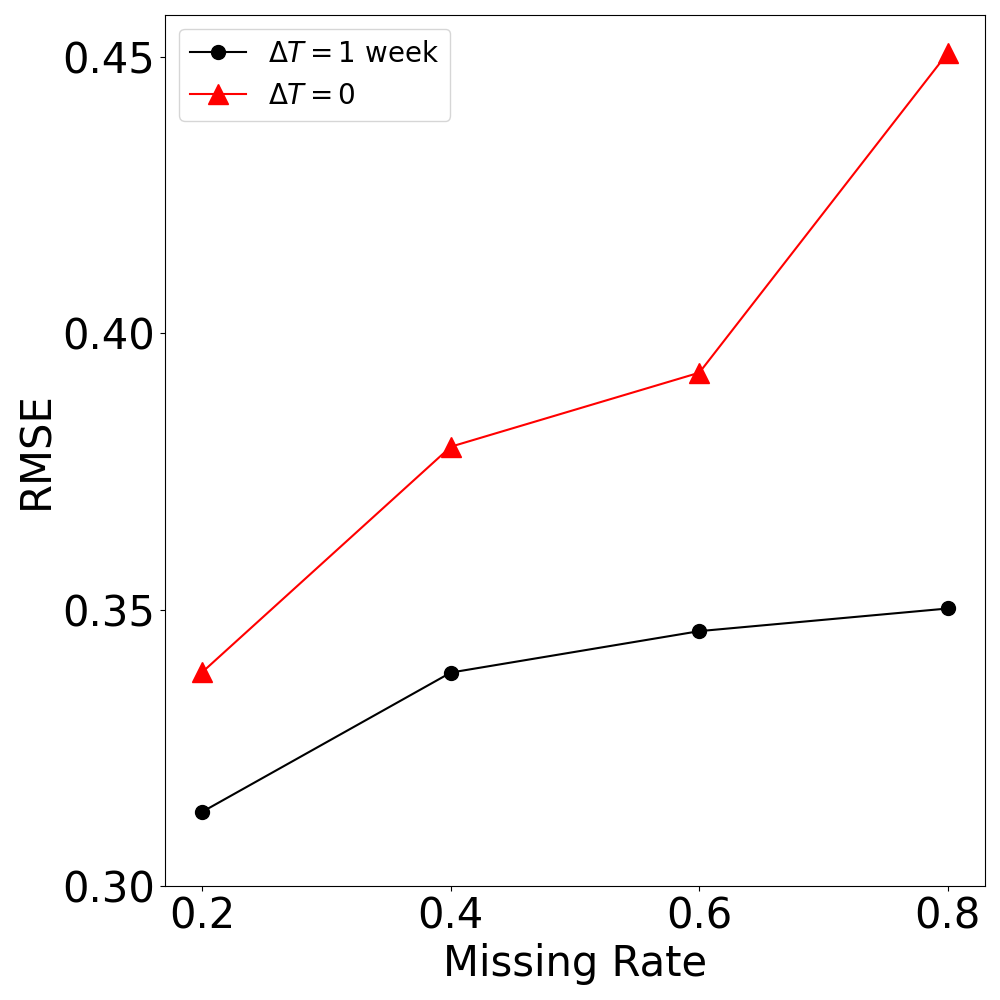}\label{fig:time_shift_co}
  \caption{Spatial-Temporal Forecasting}
\end{subfigure}
\caption{Effect of using prior snippets for forecasting.}
\label{fig:ablation_prior}
\end{figure}

\noindent\textbf{Spatial-Temporal Time Series.}
The results for spatial-temporal time series are presented in Figure \ref{fig:main_co}.
The general observation is similar to the single time series that \retime\ consistently performs better than the state-of-the-art methods.

\subsection{Ablation Study}
We study the impact of each component of \retime\ on the traffic dataset, which is the largest dataset.

\noindent {\bf Impact of the Attentions.}
As shown in Figure \ref{fig:ablation}, compared with the full model \retime\ , if we only use either the content or temporal attention, the performance will drop.
Besides, spatial attention alone performs worse than temporal attention alone, showing the importance of modeling temporal dependencies.

\noindent {\bf Beyond the First Order Neighbors.}
In Figure \ref{fig:ablation}, the ``1st-order'' is \retime\  using the 1st-order neighbors of the targets as references.
\retime\ is better than ``1st-order'', indicating that it is important to choose appropriate neighbors as references.
RWR used in the relational retrieval stage could capture global proximity scores.

\noindent {\bf Effect of the Content Synthesis Model.}
The ``Retrieval Only'' in Figure \ref{fig:ablation} uses the average of the $K$ retrieved references as the prediction.
Compared with the retrieval-only model, \retime\ performs better, demonstrating the effectiveness of the content synthesis.

\noindent {\bf Effect of Using Prior Snippets for Forecasting.}
When the time series has clear periodic patterns, it is natural to resort to the historical snippets in the database for help.
In Figure \ref{fig:ablation_prior}, ``$\Delta T=0$'' means that the target and reference snippets have the same start time, and thus the values of both the targets and references are zeros after the separation time $\tau$.
``$\Delta T=1$ week'' means that the start time of the reference snippets is one week before the targets. 
Figure \ref{fig:ablation_prior} shows that using prior snippets can significantly improve the model's performance.

\begin{figure*}[t]
\centering
\begin{subfigure}[b]{.23\textwidth}
  \includegraphics[width=\linewidth]{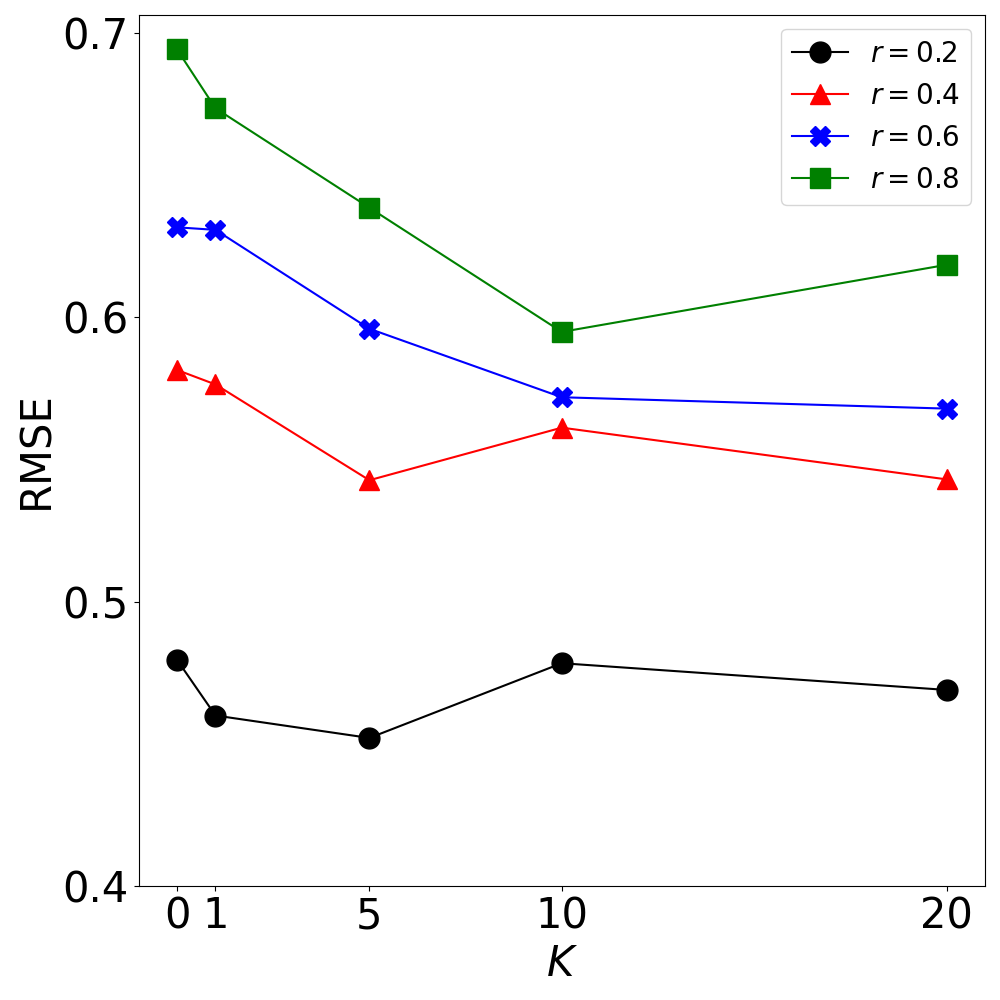}
  \caption{Single Forecasting}\label{fig:sensitivity_single_forecast}
\end{subfigure}\,
\begin{subfigure}[b]{.23\textwidth}
\includegraphics[width=\linewidth]{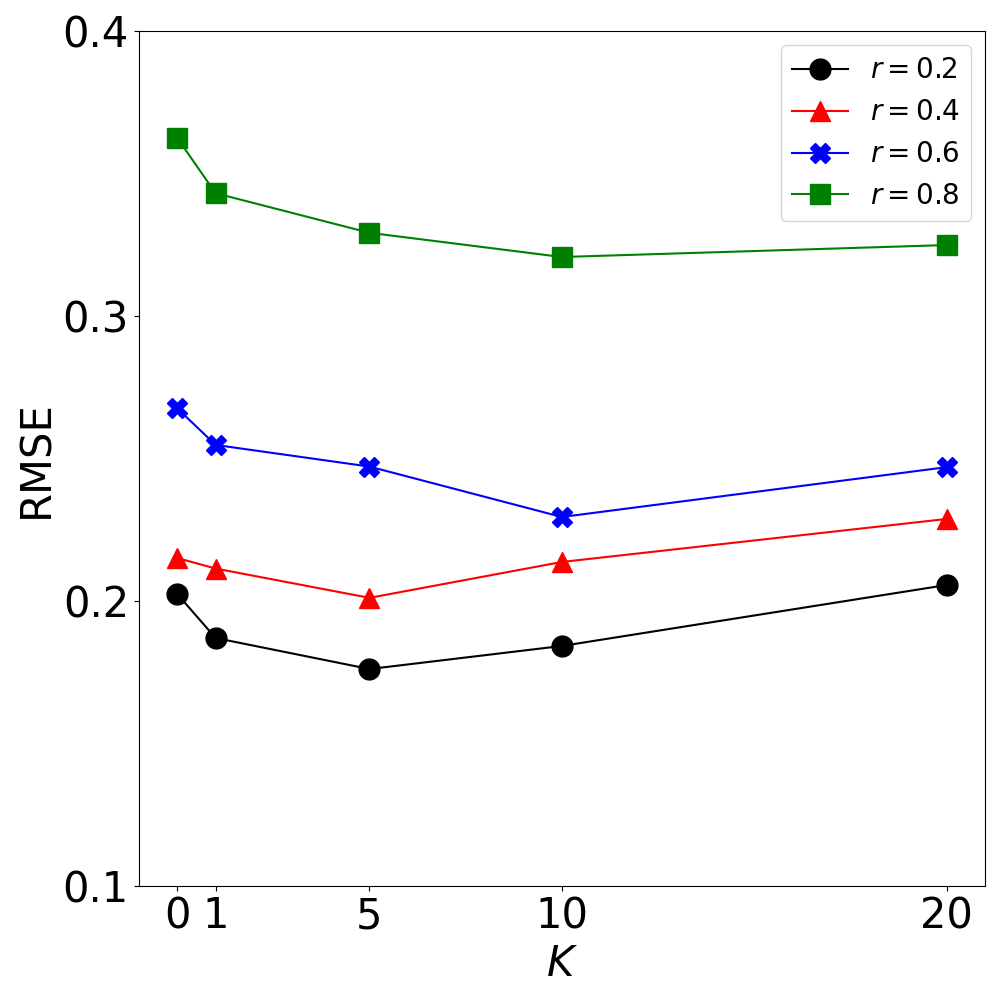}
 \caption{Single Imputation}\label{fig:sensitivity_single_impute}
\end{subfigure}\,
\begin{subfigure}[b]{.23\textwidth}
  \includegraphics[width=\linewidth]{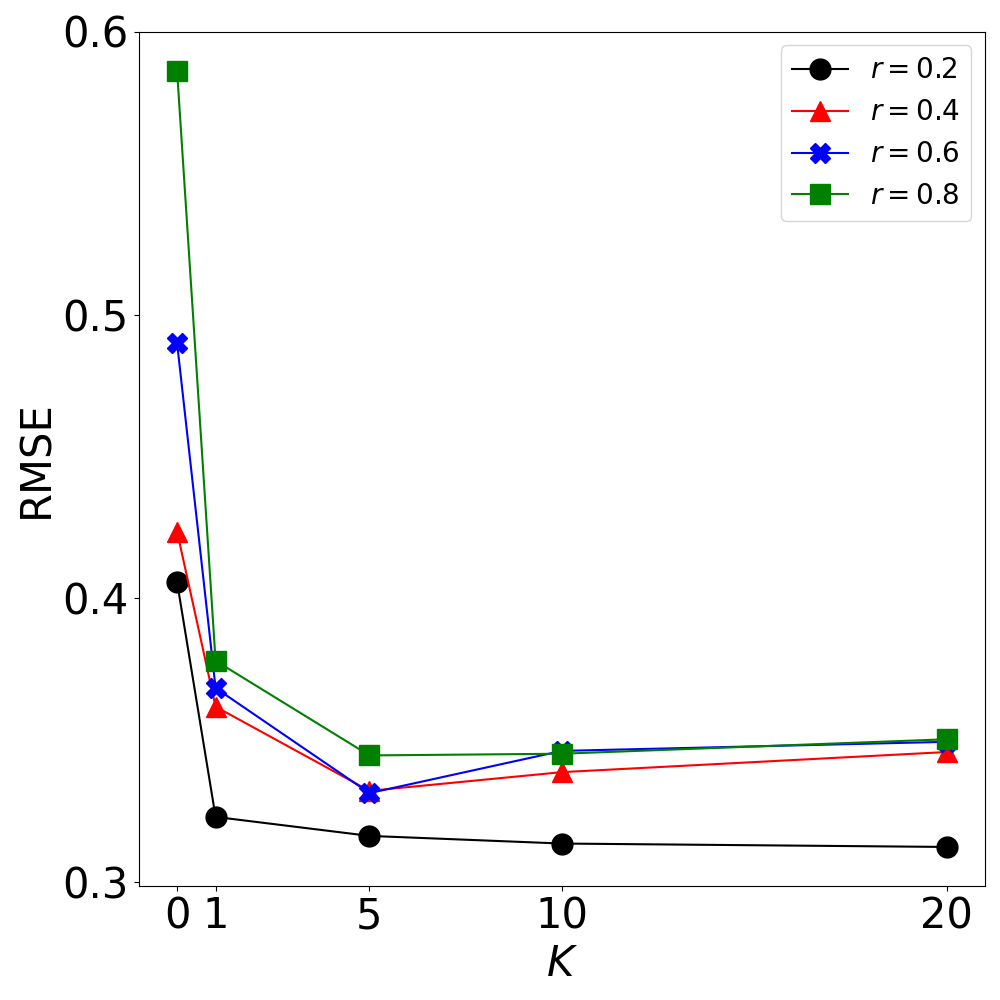}
  \caption{Spatial-Temporal Forecasting}\label{fig:sensitivity_co_forecast}
\end{subfigure}\,
\begin{subfigure}[b]{.23\textwidth}
  \includegraphics[width=\linewidth]{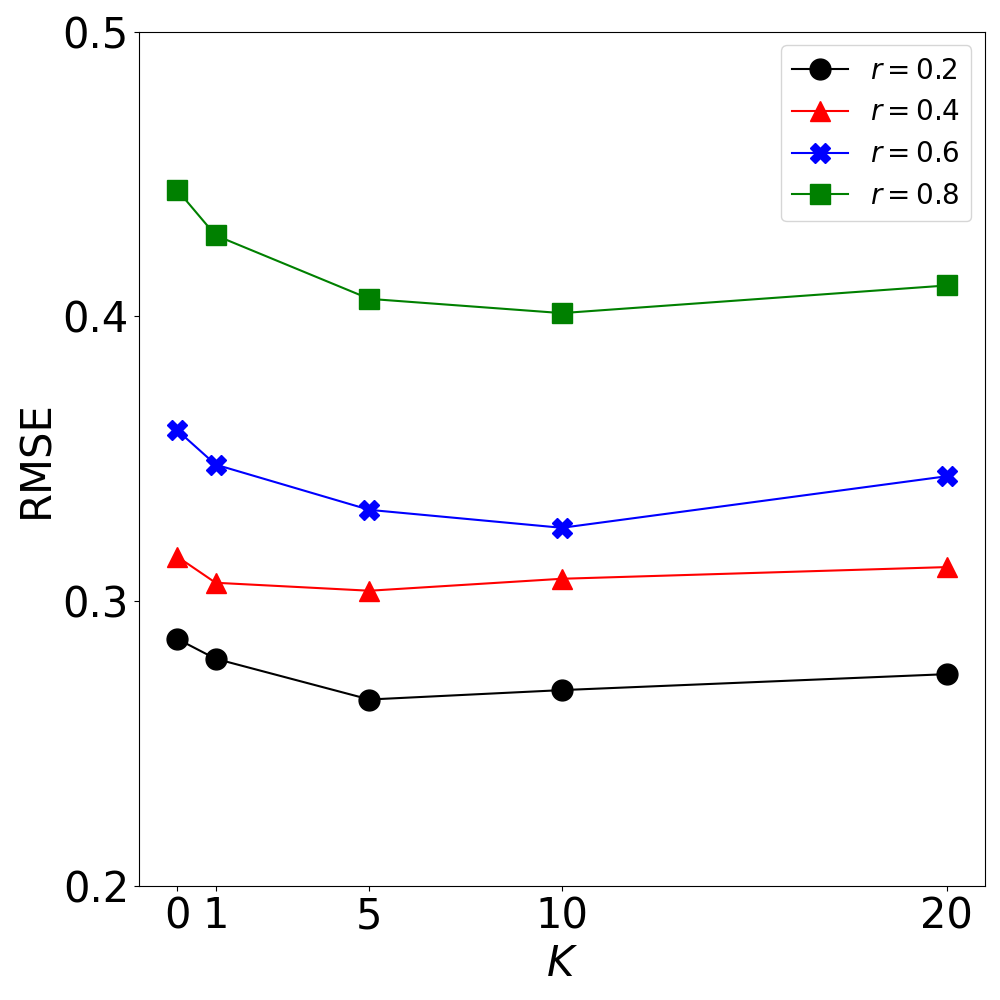}
  \caption{Spatial-Temporal Imputation}\label{fig:sensitivity_co_impute}
\end{subfigure}\,
\caption{The performance of \retime\ w.r.t. different $K$.}
\label{fig:sensitivity}
\end{figure*}

\begin{figure}[t]
\centering
\begin{subfigure}[b]{.22\textwidth}
\includegraphics[width=\linewidth]{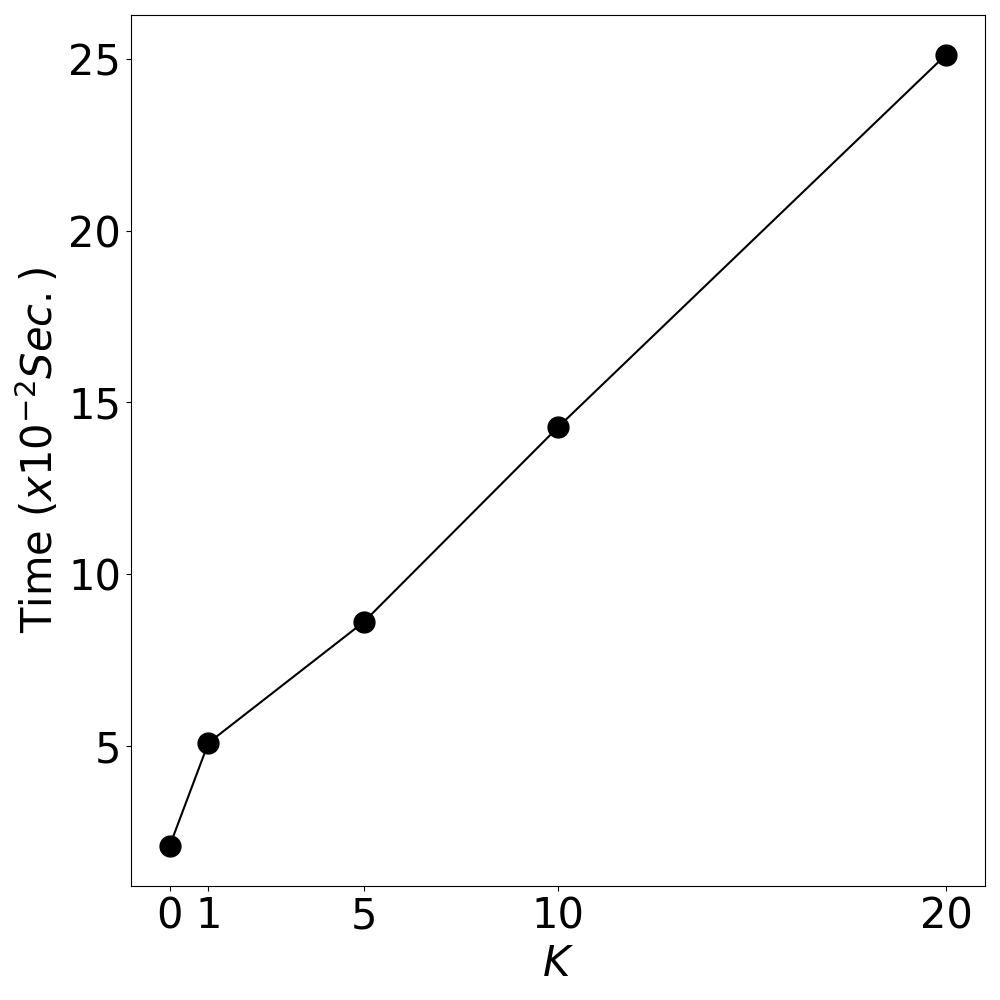}
 \caption{Time Usage}\label{fig:time_vs_k}
\end{subfigure}\,
\begin{subfigure}[b]{.22\textwidth}
  \includegraphics[width=\linewidth]{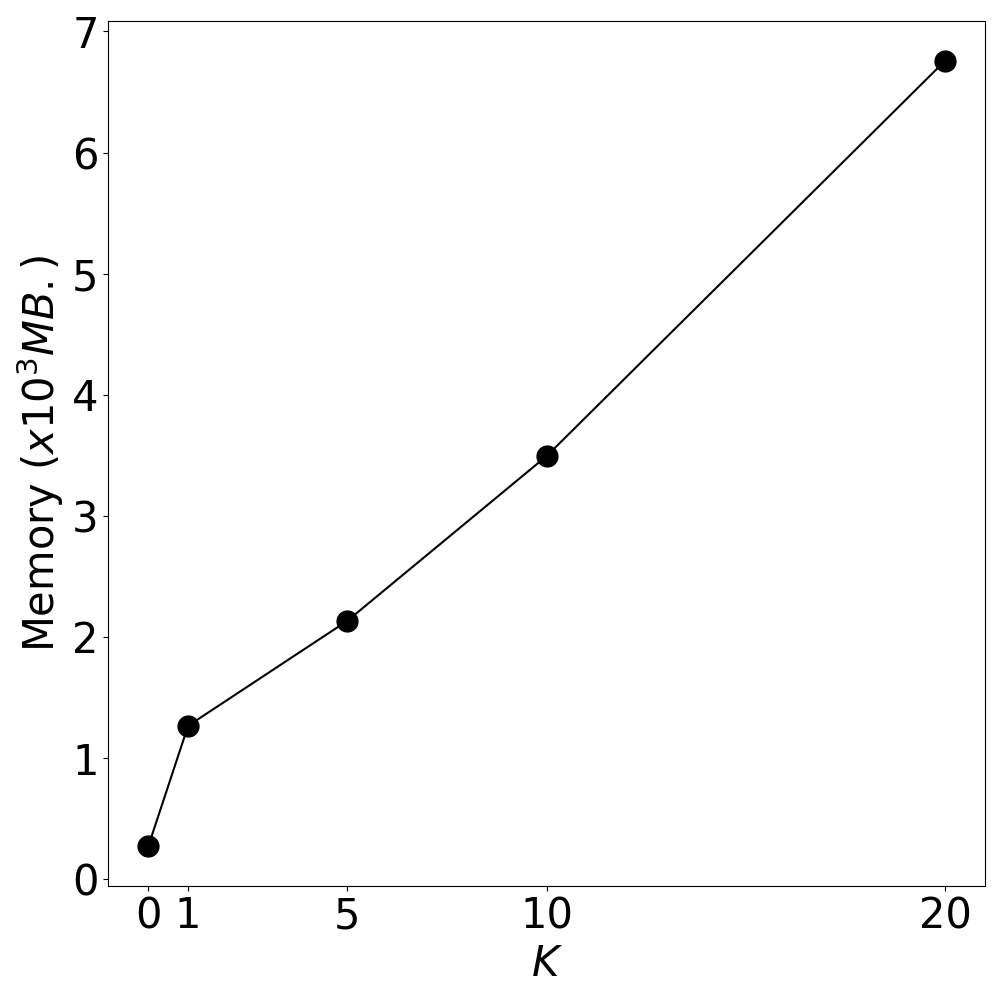}
  \caption{Memory Usage}\label{fig:memory_vs_k}
\end{subfigure}
\caption{Time and Memory Usage.}
\label{fig:time_and_memory}
\end{figure}

\subsection{Impact of $K$}
We study the performance of \retime\ and its time/memory usage w.r.t. the number of references $K\in\{0, 1, 5, 10, 20\}$.
The experiments are conducted on the traffic dataset.

\noindent\textbf{Performance of \retime.}
The performance of \retime\ w.r.t. the number of references $K$ is presented in Figure \ref{fig:sensitivity}, where $r$ denotes the missing rate.
\retime\ achieves the best performance when $K\in\{5, 10\}$
Note that 
for spatial-temporal forecasting, the top 1 reference snippet of a target is its own historical snippet.

\noindent\textbf{Time and Memory Usage.}
We fix the batch size as 100 and record the average training time ($\times 10^{-2}$ seconds) of each iteration and the GPU memory usage ($\times 10^3$ MegaBytes) for different $K$.
The results in Figure \ref{fig:time_and_memory} show that the training time and memory usage grow linearly w.r.t. $K$.


\section{Related Work}\label{sec:related_work}
\subsection{Single Time Series Methods}
Many deep learning methods have been proposed to generate time series, including Recurrent Neural Network (RNN) based methods \cite{cao2018brits, che2018recurrent} and block-style methods \cite{oreshkin2019n, zhou2021informer}.
For example, 
Cao et al. \cite{cao2018brits} introduce BRITS which leverages the recurrent dynamics for both correlated and uncorrelated multivariate time series. 
Fortuin et al. \cite{fortuin2020gp} combines the Gaussian process to capture the temporal dynamics and reconstruct missing values by VAE \cite{kingma2013auto}. 
Luo et al. \cite{luo2018multivariate} introduce GRUI and propose a two-stage GAN \cite{goodfellow2014generative} model, the generator and discriminator of which are based on GRUI. 
E2GAN \cite{luo2019e2gan} further simplifies the generator by combining GRUI and denoising autoencoder such that the GAN-based imputation can be trained in an end-to-end manner. 
Gamboa et al. \cite{gamboa2017deep} explore different neural networks for time series analysis.
Oreshkin et al. \cite{oreshkin2019n} introduce N-BEATS for explainable time series forecasting.
Li et al. \cite{li2019enhancing} propose an enhanced version of Transformer \cite{vaswani2017attention} for forecasting.
Zhou et al. \cite{zhou2021informer} propose Informer for long time series forecasting.
Wu et al. \cite{wu2021autoformer} introduce AutoFormer, which reduces the complexity of Transformer.
However, when the missing rate grows to a high level, the performance of these methods drops significantly.
\retime\ address this issue by retrieving relevant reference time series as an augmentation. 

\subsection{Spatial-Temporal Time Series Methods}
Time series often co-evolve with each other.
Networks/graphs are commonly used data structures to model the relations among objects \cite{tong2006fast, jing2021hdmi, li2022graph, jing2021graph}, which have also been used to model relations in spatial-temporal time series.
Traditional methods leverage probabilistic graphical models to introduce graph regularizations \cite{cai2015fast, cai2015facets}.
Recently, Li et al. \cite{li2017diffusion} introduce DCRNN combining the diffusion graph kernel with RNN.
Yu et al. \cite{yu2017spatio} and Zhao et al. \cite{zhao2019t} respectively propose STGCN and TGCN for modeling spatial-temporal time series.
Jing et al. \cite{jing2021network} introduce NET$^3$ which captures both explicit and implicit relations among time series.
Cao et al. \cite{NEURIPS2020_cdf6581c} introduce StemGNN which combines graph and discrete Fourier transform to jointly model spatial and temporal relations.
Wu et al. \cite{wu2020connecting} introduce MTGNN which learns the relation graph of time series.
However, these methods do not allow models to refer to the historical snippets when forecasting future values, which perform worse than \retime.

\subsection{Retrieval Based Generation}
The main idea underlying retrieval-based methods is to retrieve references from databases to guide generation.
Cao et al. \cite{cao2018retrieve} propose Re$^3$Sum to generate document summaries based on the retrieved templates.
Song et al. \cite{song2018ensemble} propose to generate dialogues based on the retrieved references. 
Lewis et al. \cite{lewis2020retrieval} introduce Retrieval-Augmented Generation (RAG) for knowledge-intensive natural language processing tasks e.g., summarization \cite{jing2021multiplex}.
Tseng et al. \cite{tseng2020retrievegan} propose RetrievalGAN to generate images by retrieving relevant images.
Ordonez et al. \cite{ordonez2016large} generate image descriptions based on the retrieved captions.
To the best of our knowledge, we present the first retrieval-based deep generation model for time series data.

\section{Conclusion}\label{sec:conclusion}
In this paper, we theoretically quantify the uncertainty of the predicted values and prove that retrieved references could help to reduce the uncertainty for prediction results.
To empirically demonstrate the effectiveness of retrieval-based forecasting, we build a simple yet effective method called \retime, which is comprised of a relational retrieval stage and a content synthesis stage.
The experimental results on real-world datasets demonstrate the effectiveness of \retime. 


\bibliographystyle{unsrtnat}
\bibliography{sample-base}

\end{document}